\newcommand{\cY}{\mathcal Y}
\newcommand{\cU}{\mathcal U}
\newcommand{\R}{\mathds R}
\newcommand{\E}{\mathds E}
\newcommand{\bL}{\mathbb{L}}
\newcommand{\I}{\mathbf{I}}
\renewcommand{\P}{\mathds P}
\newcommand{\F}{\mathcal{F}}
\newcommand{\diag}{\text{diag}}
\DeclareMathOperator*{\argmax}{argmax}
\newtheorem{assumption}{Assumption}
\newtheorem{theorem}{Theorem}
\newtheorem{proposition}[theorem]{Proposition} 
\newtheorem{corollary}[theorem]{Corollary}
\title{A stochastic gradient descent algorithm with random search directions}
\author{\name Eméric Gbaguidi \email thierry-emeric.gbaguidi@math.u-bordeaux.fr\\
        \addr Institut de Mathématiques de Bordeaux\\
       Université de Bordeaux}
\begin{document}

\maketitle

\begin{abstract}
Stochastic coordinate descent algorithms are efficient methods in which each iterate is obtained by fixing most coordinates at their values from the current iteration, and approximately minimizing the objective with respect to the remaining coordinates. However, this approach is usually restricted to canonical basis vectors of $\mathds{R}^d$. In this paper, we develop a new class of stochastic gradient descent algorithms with random search directions which uses the directional derivative of the gradient estimate following more general random vectors. We establish the almost sure convergence of these algorithms with decreasing step. We further investigate their central limit theorem and pay particular attention to analyze the impact of the search distributions on the asymptotic covariance matrix. We also provide non-asymptotic $\mathbb{L}^p$ rates of convergence.
\end{abstract}

\section{Introduction}
Consider the unconstrained optimization problem in $\R^d$ which can be written as
\begin{equation}\tag{$\mathcal{P}$}\label{problem1}
    \min_{x\in \R^d} f(x),
\end{equation}
where $f$ is the average of many functions,
\begin{equation}\label{problem1_eq1}
    f(x)=\dfrac{1}{N}\sum_{k=1}^N f_k(x).
\end{equation}
Many computational problems in various disciplines can be formulated as above and the stochastic algorithms are now a common approach to solve (\ref{problem1}) since the seminal works of \citet{robbins1951stochastic} and \citet{kiefer1952stochastic}. The most famous of them is the standard Stochastic Gradient Descent (SGD) algorithm given, for all $n\geqslant 1$, by
\begin{equation}\label{sgd}\tag{SGD}
    X_{n+1}=X_n - \gamma_n \nabla f_{U_{n+1}}(X_n),
\end{equation}
where the initial state $X_1$ is a squared integrable random vector of $\R^d$ which can be arbitrarily chosen, $\nabla f (X_n)$ is the gradient of the function $f$ calculated at the value $X_n$, $(U_n)$ is a sequence of independent and identically distributed random variables with uniform distribution on $\{1,2,\dots,N\}$, which is also independent from the sequence $(X_n)$. Moreover, $(\gamma_n)$ is a positive deterministic sequence decreasing towards zero and satisfying the standard conditions
\begin{equation}\label{gamma_cond1}
    \sum_{n=1}^\infty \gamma_n = + \infty \qquad \text{and } \qquad \sum_{n=1}^\infty \gamma_n^2 < + \infty.
\end{equation}
Despite its computational efficiency provided by using a random gradient estimate, the SGD algorithm still requires the computation of a vector of size $d$ at each iteration. However, in large-scale machine learning problems, the reduction of the calculation cost remains one of the main challenges. Moreover, the vast majority of SGD algorithms update all coordinates in the same way \citep{robbins1951stochastic,gower2021stochastic,leluc2022sgd}.

These issues led to the development of Stochastic Coordinate Gradient Descent (SCGD) algorithms which are very easy to implement and have become unavoidable in high-dimensional framework \citep{nesterov2012efficiency,shalev2013stochastic,wright2015coordinate}. This new class of methods have received a great deal of attention in recent years due to their potential for solving large-scale optimization problems \citep{lin2014accelerated,leluc2022sgd}.

The SCGD algorithm modifies the SGD algorithm in the sense that its update rule is given, for all $n\geqslant 1$, by 
\begin{equation}\label{scgd}
\left\{ \begin{array}{ll}
         X_{n+1}^{(j)}=X_{n}^{(j)} &\text{ if } j\neq \xi_{n+1},\vspace{0.2cm}\\
         X_{n+1}^{(j)}=X_n^{(j)} - \gamma_n g_{n+1}^{(j)} &\text{ if } j=\xi_{n+1},
    \end{array} \right. 
\end{equation}
where $X_{n}^{(j)}$ stands for the $j$-th component of a vector $X_n$, $g_{n+1}=\nabla f_{U_{n+1}}(X_n)$ and $(\xi_n)$ is a sequence of random variables with values in $\{1,2,\dots,d\}$ used to select a coordinate of the gradient estimate and follows distribution called the \textit{coordinate sampling policy}.

Hence, the SCGD algorithm selects and updates one coordinate at each iteration in order to sufficiently reduce the objective value while keeping other coordinates fixed. In fact, this approach can also be seen as the SGD algorithm applied just on one random coordinate. There are many strategies for the choice of the coordinate sampling policy. Our goal is to go further in the analysis of the SCGD algorithms with decreasing step. 
Firstly, we develop the class of stochastic gradient descent algorithms with random search directions which includes the SCGD algorithms and uses the directional derivative of the gradient estimate following more general random vectors. Thus, beyond the SCGD algorithms, we consider the random directions sampled from gaussian and spherical distributions (see Section \ref{sec:framework}). Based on weak hypotheses associated to the objective function, we establish the almost sure convergence of these algorithms with decreasing step (see Section \ref{sec:main_results_convps}).
Secondly, we investigate their central limit theorem and we obtain that the asymptotic covariance matrix depends on the choice of the search direction distributions (see Section \ref{sec:main_results_tlc}). Therefore, we provide an introductory analysis on the asymptotic performances for different choices of random directions. Lastly, we provide non-asymptotic $\bL^p$ rates of convergence for the SGD algorithms with random search directions (see Section \ref{sec:main_results_mse}).


\subsubsection*{Organization of the paper.}
The rest of the paper is organized as follows. Section \ref{sec:related_work} summarizes the state of art on the SCGD algorithms. In Section \ref{sec:framework}, we introduce the theoretical framework of our paper. Section \ref{sec:main_results} is devoted to our main results. Finally, in Section \ref{sec:experiments} we illustrate the good performances of the algorithms with decreasing step through numerical experiments on simulated data. All technical proofs are postponed in appendices. 

\section{Related work}\label{sec:related_work}

Several stochastic coordinate descent algorithms and their variants were proposed and analyzed over years in \citep{shalev2009stochastic,nesterov2012efficiency,richtarik2016optimal,gorbunov2020unified,leluc2022sgd}. They are recursive stochastic algorithms in which each iterate is obtained by fixing most coordinates at their values from the current iteration, and approximately minimizing the objective with respect to the remaining coordinates \citep{wright2015coordinate}. 
For \citet{nesterov2012efficiency}, the main advantage of the coordinate descent methods is the simplicity of each iteration, both in generating the descent direction and in updating of the variables. Hence, many questions have been addressed in the literature on these algorithms due to their considerable interest. 

On the one hand, the choice of the coordinate sampling policy is clearly a major issue. From that, we can distinguish two main classes of coordinate selection rules: \textit{deterministic and stochastic}. However, \citet{sun2017asynchronous} considered that the stochastic block rule is easier to analyze because taking expectation will yield a good approximation to the full gradient and ensures that every coordinate is updated at the specified frequency. \citet{richtarik2016optimal} also obtained that the scheme of updating a single randomly selected coordinate per iteration with optimal probabilities may require less iterations to converge, than all coordinates updating rule at every iteration. For its part, \cite{needell2014stochastic} suggested a non-uniform sampling for the stochastic gradient descent. Moreover, \citep{chang2008coordinate,shalev2013stochastic} proved that the randomized strategies are more efficient than the simple rule of cycling through the coordinates \citep{luo1992convergence,beck2013convergence,saha2013nonasymptotic}.  Nevertheless, \cite{nutini2015coordinate} compared and showed that the deterministic Gauss-Southwell rule is faster than the random coordinate selection rule in some empirical cases. 
However, \cite{saha2013nonasymptotic} remember us that Gauss–Southwell rule typically takes $\mathcal{O}(d)$ time to implement instead of just $\mathcal{O}(1)$ for the uniform randomized rule. 
This was the main motivation for the choice of uniform probabilities in \citep{shalev2009stochastic} which can also justify a good performance of the SCGD algorithms.

On the other hand, \citet{tao2012stochastic} proposed and showed the convergence rates of stochastic coordinate descent methods adapted for the regularized smooth and non-smooth losses. Likewise, \citet{beck2013convergence} established a global sublinear rate of convergence of the block coordinate gradient projection method with constant stepsize which depends on the Lipschitz parameters. Later, \citet{lin2014accelerated} developed an accelerated randomized proximal coordinate gradient method which achieves faster linear convergence rates for minimizing strongly convex functions than existing randomized proximal coordinate gradient methods. Moreover, \citet{konevcny2017semi} introduced the semi-stochastic coordinate descent algorithm with constant stepsize which picks a random function and a random coordinate both using non-uniform distributions at each step. They proved the convergence of $f(X_n)$ towards the minimum of $f$ in $\bL^1$ under the strong convexity assumption on $f$. There exist many others contributions on the stochastic coordinate descent variants. For instance, \citet{richtarik2016optimal} investigated the parallel coordinate descent method in which a random subset of coordinates is chosen and updated at each iteration. \citet{allen2016even} also studied the accelerated coordinate descent algorithm using non-uniform sampling. 

More recently, \citet{leluc2022sgd} provided the almost sure convergence as well as non-asymptotic bounds of the SCGD algorithm with decreasing step in a non-convex setting and including zeroth-order gradient estimate.
They proved the convergence of the SCGD iterates towards stationary points in the sense that $\nabla f(X_n)$ converges to $0$ almost surely. They also proposed non-asymptotic bounds on the optimality gap $\E[f(X_n)-f^*]$ where $f^*$ is a lower bound of $f$. These results have been established by assuming that $f$ is $L$-smooth and under the growth condition \citep{leluc2022sgd,gower2021stochastic}. The non-asymptotic bounds required the additional Polyak–Lojasiewicz (PL) condition \citep{polyak1963gradient}.

Although the assumptions used in \citep{leluc2022sgd} are relatively weak, one can still try to relax them and provide the algorithm convergence directly on the sequence $(X_n)$. Furthermore, one can also be interested by non-asymptotic $\bL^p$ rates of convergence for any integer $p$. To our best knowledge, the central limit theorem for the SCGD algorithm was not previously established.

\section{Preliminaries} \label{sec:framework}

In this section, we present the mathematical background of the paper. Firstly, we introduce some notations. Secondly, we shall formulate our new class of SGD algorithms with random search directions which includes the SCGD algorithm. Finally, we will spell out some regularity assumptions.

The SCGD algorithm as given in \eqref{scgd}, represents the practical definition by considering the vectors in the canonical basis of $\R^d$. However, we can extend this coordinate selecting rule by using more general random vectors with a possible adaptive sampling policy. Therefore, we introduce the Stochastic Coordinate Gradient Descent algorithm with Random Search Direction (SCORS) defined for all $n\geqslant 1$, by 

\begin{equation}\label{scgdrs}\tag{SCORS}
    X_{n+1}=X_n - \gamma_n D(V_{n+1})\nabla f_{U_{n+1}}(X_n),
\end{equation}
where the initial state $X_1$ is a squared integrable random vector of $\R^d$ which can be arbitrarily chosen, $D(v)=v v^T$ for any vector $v\in \R^d$, the sequence $(U_n)$ is independent from $(X_n)$ where $(U_n)$ is independent and identically distributed with $\cU\left([\![1,N]\!]\right)$ distribution and $V_n$ is a random vector of $\R^d$ sampled from an underlying distribution $\mathcal{P}_n$ satisfying certain conditions (see Assumption \ref{scgd_cond1b_s_policy} below). Furthermore, we assume that $V_{n+1}$ is independent from $U_{n+1}$ conditionally on $\F_n$, where $\F_n=\sigma(X_1,\dots,X_n)$ is the $\sigma$-field associated to the sequence of iterates $(X_n)$. In the same way as before, $(\gamma_n)$ is a positive deterministic sequence decreasing towards zero and satisfying the standard conditions \eqref{gamma_cond1}.

Moreover, we will consider the random direction $V_{n+1}$ that satisfies the distributional constraint $\E[D(V_{n+1})|\F_n]=\I_d$ in order to reduce the bias in the SCORS gradient estimate.

\begin{assumption}\label{scgd_cond1b_s_policy}
    For all $n\geqslant 1$, we assume that the random direction vector $V_n$ is sampled from an independent and possibly adaptive distribution $\mathcal{P}_n$ such that
    \begin{equation}\label{scgd_cond1b_s_policy_eq1}
        \E[D(V_{n+1})|\F_n]= \I_d \qquad a.s.
    \end{equation}
    In addition, we suppose that the $4$-th conditional moment of $V_n$ is bounded for all $n\geqslant 1$.
\end{assumption}

\noindent \textbf{Choices of the direction distribution. }
We conduct here a comprehensive analysis on different choices of distributions $\mathcal{P}_n$ satisfying Assumption \ref{scgd_cond1b_s_policy}. Several popular choices were proposed in \citep{chen2024online} and are listed as follows.
\begin{enumerate}\label{choice1}
    \item[\hypertarget{c_uniform}{$\mathbf{(U)}$}] Uniform in the canonical basis: $V_n$ is sampled from $\{\sqrt{d}e_1,\dots,\sqrt{d}e_d\}$ with equal probability $1/d$, where $(e_1,\dots,e_d)$ is the canonical basis of $\R^d$.
    \item[\hypertarget{c_nuniform}{$\mathbf{(NU)}$}] Non-uniform in the canonical basis with probabilities $\left(p_{n,1},\dots, p_{n,d}\right)$: $V_n$ is sampled such that for all $n\geqslant 1$ and $j \in [\![1,d]\!]$, 
    \begin{equation}\label{scgd_s_policy_eq1}
        V_n=\sqrt{\dfrac{1}{p_{n,\xi_n}}}e_{\xi_n},
    \end{equation}
    where $(\xi_n)$ is a sequence of independent random variables defined on $[\![1,d]\!]$ such that 
    \begin{equation}\label{scgd_s_policy_eq2}
        \P[\xi_n=j\lvert \F_{n-1}]=p_{n,j},
    \end{equation}
    with $p_{n,j}>0$ for all $j \in [\![1,d]\!]$ and $\sum_{j=1}^d p_{n,j}=1$.
    \item[\hypertarget{c_gaussian}{$\mathbf{(G)}$}] Gaussian: $(V_n)$ is sampled from the normal distribution $\mathcal{N}(0,\I_d)$.
     \item[\hypertarget{c_spherical}{$\mathbf{(S)}$}] Spherical: we sample $(V_n)$ from the uniform distribution on the sphere with Euclidean norm $\lVert V_n \rVert^2 = d$.
\end{enumerate}

It is easy to see that in all above choices, the search distribution $\mathcal{P}_n$ satisfies the equality \eqref{scgd_cond1b_s_policy_eq1}. Moreover, in the particular uniform sampling case, the random coordinate is uniformly chosen at each iteration. In others words, the coordinate along which the descent shall proceed, is selected in $[\![1,d]\!]$ with the same probability equal to $1/d$. Furthermore, we can consider many procedures to compute the probabilities $(p_{n,1},\dots,p_{n,d})$ in the non-uniform case. We will spell out different strategies in the numerical experiments.

However, one can remark that the \hyperlink{c_uniform}{$\mathbf{(U)}$} and \hyperlink{c_nuniform}{$\mathbf{(NU)}$} direction distribution choices correspond to the standard SCGD algorithms \eqref{scgd}. Thus, the most interesting contribution here lies in the cases \hyperlink{c_gaussian}{$\mathbf{(G)}$} and \hyperlink{c_spherical}{$\mathbf{(S)}$} by taking into account random directions of descent other than those of the canonical basis vectors of $\R^d$. The SCORS algorithm can also be interpreted as the use of the directional derivative of the gradient estimate following given random vectors.

In the sequel, we state the others general assumptions required for our analysis and used throughout the paper.
\begin{assumption}\label{scgd_cond1}
    Assume that the function $f$ is continuously differentiable with a unique equilibrium point $x^*$ in $\R^d$ such that $\nabla f(x^*) = 0$.
\end{assumption}
\begin{assumption}\label{scgd_cond2}
    Suppose that for all $x\in \R^d $ with $x\neq x^*$,
    \begin{equation*}
        \langle x-x^*,\nabla f(x)\rangle >0.
    \end{equation*}
\end{assumption}
\begin{assumption}\label{scgd_cond3}
    Assume there exists a positive constant $L$ such that, for all $x\in \R^d$,
\begin{equation*}
    \dfrac{1}{N}\sum_{k=1}^N \lVert \nabla f_{k} (x)-\nabla f_{k} (x^*) \rVert^2 \leqslant L\lVert x -x^*\rVert^2.
\end{equation*}
\end{assumption}

Assumption \ref{scgd_cond1} is a standard hypothesis for the study of SGD algorithms. 
Furthermore, Assumption \ref{scgd_cond2} does not impose a restrictive strong convexity hypothesis on $f$. We need to note though that this condition is much weaker than saying that $f$ is strictly convex.
Moreover, our Assumption \ref{scgd_cond3} is of key importance and essential in general non-convex setting. This condition states that at the optimal point, the gradient of all functions $f_k$ for any $1\leqslant k\leqslant N$, does not change arbitrarily with respect to the vector $x \in \R^d$. Let us link our assumption with a Lipschitz condition for the gradient of the objective function. It is obvious that if each function $f_k$ has Lipschitz continuous gradient with constant $\sqrt{L_k}$, then Assumption \ref{scgd_cond3} is satisfied by taking $L$ as the average value of all $L_k$. Assumption \ref{scgd_cond3} is also less restrictive than the growth conditions with the $\sqrt{L}$-smoothness of $f$, which are classical assumptions among the literature. We point out that both Assumptions \ref{scgd_cond2} and \ref{scgd_cond3} are enough suitable to obtain the almost sure convergence of the SCORS algorithm.

\section{Main results}\label{sec:main_results}
We first present the almost sure convergence of the SCORS algorithm. Then, we focus on the central limit theorem for the SCORS algorithm, and finally we propose non-asymptotic $\bL^p$ rates of convergence.

\subsection{Almost sure convergence}\label{sec:main_results_convps}

\begin{theorem}\label{scgdus_th_convps}
    Consider that $(X_n)$ is the sequence generated by the SCORS algorithm with decreasing step sequence $(\gamma_n)$ satisfying (\ref{gamma_cond1}). In addition, suppose that Assumptions \ref{scgd_cond1b_s_policy}, \ref{scgd_cond1}, \ref{scgd_cond2} and \ref{scgd_cond3} are satisfied. Then, we have
    \begin{equation}\label{scgdus_th_convps_res1}
        \lim_{n\to +\infty} X_n=x^* \qquad a.s.,
    \end{equation}
   and  
   \begin{equation}\label{scgdus_th_convps_res2}
        \lim_{n\to +\infty} f(X_n)=f(x^*) \qquad a.s.
    \end{equation}
\end{theorem}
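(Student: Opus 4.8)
The plan is to use the Robbins--Siegmund almost supermartingale convergence theorem applied to the squared distance $\|X_n - x^*\|^2$. First I would compute the conditional expectation of $\|X_{n+1}-x^*\|^2$ given $\F_n$. Writing $g_{n+1} = \nabla f_{U_{n+1}}(X_n)$ and using the SCORS update, we have
\begin{equation*}
\|X_{n+1}-x^*\|^2 = \|X_n - x^*\|^2 - 2\gamma_n \langle X_n - x^*, D(V_{n+1}) g_{n+1}\rangle + \gamma_n^2 \|D(V_{n+1}) g_{n+1}\|^2.
\end{equation*}
Taking conditional expectation, the key is that $V_{n+1}$ and $U_{n+1}$ are conditionally independent given $\F_n$, and $\E[D(V_{n+1})|\F_n] = \I_d$ by Assumption \ref{scgd_cond1b_s_policy}, while $\E[g_{n+1}|\F_n] = \nabla f(X_n)$. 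Hence the cross term gives $-2\gamma_n \langle X_n - x^*, \nabla f(X_n)\rangle$, which is nonpositive by Assumption \ref{scgd_cond2}. For the remainder term, I would bound $\|D(V_{n+1}) g_{n+1}\|^2 = \|V_{n+1}\|^2 \langle V_{n+1}, g_{n+1}\rangle^2$; using the bounded 4th conditional moment of $V_{n+1}$ together with Cauchy--Schwarz and then the bound $\E[\|g_{n+1}\|^2|\F_n] \leqslant 2L\|X_n - x^*\|^2 + 2\|\nabla f(x^*)\|^2$-type estimates derived from Assumption \ref{scgd_cond3} (note $\nabla f(x^*)=0$ and more generally $\frac1N\sum_k\|\nabla f_k(x^*)\|^2$ is a finite constant), one obtains
\begin{equation*}
\E\big[\|X_{n+1}-x^*\|^2 \,\big|\, \F_n\big] \leqslant (1 + C\gamma_n^2)\|X_n - x^*\|^2 - 2\gamma_n \langle X_n - x^*, \nabla f(X_n)\rangle + C'\gamma_n^2
\end{equation*}
for suitable constants $C, C'$ absorbing the moment bounds.

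Since $\sum \gamma_n^2 < \infty$ by \eqref{gamma_cond1}, the Robbins--Siegmund theorem applies and yields two conclusions: $\|X_n - x^*\|^2$ converges almost surely to a finite random variable $V_\infty \geqslant 0$, and $\sum_n \gamma_n \langle X_n - x^*, \nabla f(X_n)\rangle < \infty$ almost surely. From the latter and $\sum \gamma_n = \infty$, one deduces that $\liminf_n \langle X_n - x^*, \nabla f(X_n)\rangle = 0$ almost surely, so there is a (random) subsequence along which $\langle X_{n_k}-x^*,\nabla f(X_{n_k})\rangle \to 0$.

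The main obstacle is upgrading this subsequential information to the full statement $X_n \to x^*$. Since $\|X_n - x^*\|$ converges a.s., on the event where $V_\infty > 0$ the sequence $(X_n)$ stays in an annulus $\{r \leqslant \|x - x^*\| \leqslant R\}$ for large $n$; I would then argue by contradiction. Along the subsequence $(X_{n_k})$, if it stays in a compact set bounded away from $x^*$, one can extract a further subsequence converging to some $y \neq x^*$, and by continuity of $\nabla f$ (Assumption \ref{scgd_cond1}) we get $\langle y - x^*, \nabla f(y)\rangle = 0$, contradicting Assumption \ref{scgd_cond2}. A technical point to handle carefully is that $(X_n)$ a priori could be unbounded; but the a.s.\ convergence of $\|X_n-x^*\|^2$ rules this out, so the compactness argument is legitimate. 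This forces $\liminf_n \|X_n - x^*\| = 0$, hence $V_\infty = 0$, i.e.\ $X_n \to x^*$ almost surely. Finally \eqref{scgdus_th_convps_res2} follows immediately from \eqref{scgdus_th_convps_res1} by continuity of $f$.
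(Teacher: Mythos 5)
Your proposal is correct and takes essentially the same route as the paper: the identical Lyapunov expansion of $\|X_n-x^*\|^2$, the same use of the conditional independence of $V_{n+1}$ and $U_{n+1}$, the fourth-moment bound and Assumption \ref{scgd_cond3} to reach the almost-supermartingale inequality, after which the paper defers to the Robbins--Siegmund argument of Theorem 1 in \citep{bercu2024saga}, which is precisely the conclusion (a.s.\ convergence of $\|X_n-x^*\|^2$, summability of $\gamma_n\langle X_n-x^*,\nabla f(X_n)\rangle$, and the compactness/contradiction step via Assumption \ref{scgd_cond2}) that you spell out explicitly.
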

\begin{proof} Recall that for all $n\geqslant 1$,
\begin{equation}\label{scgdus_th_convps_eq1}
    X_{n+1}=X_n - \gamma_n D(V_{n+1})\nabla f_{U_{n+1}}(X_n).
\end{equation}
Let us consider the Lyapunov function $T_n$ defined for all $n\geqslant 1$, by
\[T_n=\lVert X_n -x^*\rVert^2.\]
Hence, it follows that almost surely,
\begin{align}\label{scgdus_th_convps_eq2}
    T_{n+1}&=\lVert X_{n+1} -x^*\rVert^2\nonumber\\
        &=\lVert X_n-x^* - \gamma_n D(V_{n+1})\nabla f_{U_{n+1}}(X_n)\rVert^2\nonumber\\
        &=T_n-2\gamma_n\langle X_n -x^*,D(V_{n+1})\nabla f_{U_{n+1}}(X_n)\rangle +\gamma_n^2\lVert D(V_{n+1})\nabla f_{U_{n+1}}(X_n) \rVert^2. 
\end{align}
However, we have from Assumption \ref{scgd_cond1b_s_policy} that almost surely
\begin{equation}\label{scgdus_th_convps_eq4}
    \E[D(V_{n+1})|\F_n]=\I_d.
\end{equation}
Moreover, we obtain from \eqref{problem1_eq1} and the fact that $U_{n+1}$ is uniformly distributed on $[\![1,N]\!]$ that
\begin{equation}\label{scgdus_th_convps_eq5}
    \E[\nabla f_{U_{n+1}}(X_n)|\F_n]=\nabla f(X_n) \qquad a.s.
\end{equation}
By putting together \eqref{scgdus_th_convps_eq4}, \eqref{scgdus_th_convps_eq5} and the conditional independence of $V_{n+1}$ and $U_{n+1}$, it immediately follows that
\begin{equation}\label{scgdus_th_convps_eq6}
    \E[D(V_{n+1})\nabla f_{U_{n+1}}(X_n)|\F_n]=\nabla f(X_n) \qquad a.s.
\end{equation}
Furthermore, we recall by definition that,
\begin{equation}\label{scgdus_th_convps_eq3}
    D(V_{n+1})=V_{n+1}V_{n+1}^T,
\end{equation}
which implies that
\begin{align}\label{scgdus_th_convps_eq7a}
    \lVert D(V_{n+1})\nabla f_{U_{n+1}}(X_n) \rVert^2&=\lVert V_{n+1}V_{n+1}^T\nabla f_{U_{n+1}}(X_n) \rVert^2 \nonumber\\
    &=\left(\langle V_{n+1},\nabla f_{U_{n+1}}(X_n)\rangle \right)^2 \lVert V_{n+1}\rVert^2.
\end{align}
By using the Cauchy-Schwarz inequality, we deduce from \eqref{scgdus_th_convps_eq7a} that
\begin{equation}\label{scgdus_th_convps_eq7b}
    \lVert D(V_{n+1})\nabla f_{U_{n+1}}(X_n) \rVert^2\leqslant \lVert V_{n+1}\rVert^4\lVert \nabla f_{U_{n+1}}(X_n)\rVert^2.
\end{equation}
Once again, from the conditional independence of $V_{n+1}$ and $U_{n+1}$ combined with the inequality \eqref{scgdus_th_convps_eq7b}, we obtain that almost surely 
\begin{equation}\label{scgdus_th_convps_eq7c}
     \E[\lVert D(V_{n+1})\nabla f_{U_{n+1}}(X_n) \rVert^2|\F_n]\leqslant \E[\lVert V_{n+1}\rVert^4|\F_n] \E[\lVert\nabla f_{U_{n+1}}(X_n)\rVert^2|\F_n].
\end{equation}
It implies via Assumption \ref{scgd_cond1b_s_policy} that there exists a positive constant $m_4$ such that
\begin{equation}\label{scgdus_th_convps_eq7d}
     \E[\lVert D(V_{n+1})\nabla f_{U_{n+1}}(X_n) \rVert^2|\F_n]\leqslant m_4 \E[\lVert\nabla f_{U_{n+1}}(X_n)\rVert^2|\F_n].
\end{equation}
However, we have that almost surely
\begin{equation}\label{scgdus_th_convps_eq7}
    \lVert \nabla f_{U_{n+1}}(X_n) \rVert^2\leqslant 2 \Big(\lVert \nabla f_{U_{n+1}}(X_n)- \nabla f_{U_{n+1}}(x^*) \rVert^2+\lVert \nabla f_{U_{n+1}}(x^*) \rVert^2\Big).
\end{equation}
Define for all $x\in \R^d$,
\[\tau^2(x)=\dfrac{1}{N}\sum_{k=1}^N \lVert \nabla f_{k} (x)-\nabla f_{k} (x^*) \rVert^2.\]
As $U_{n+1}$ is uniformly distributed on $[\![1,N]\!]$, we clearly have
\begin{equation}\label{scgdus_th_convps_eq8}
    \E[\lVert \nabla f_{U_{n+1}}(X_n)- \nabla f_{U_{n+1}}(x^*) \rVert^2|\F_n]= \tau^2(X_n) \qquad a.s.,
\end{equation}
and 
\begin{equation}\label{scgdus_th_convps_eq9}
    \E[\lVert \nabla f_{U_{n+1}}(x^*) \rVert^2|\F_n]= \dfrac{1}{N}\sum_{k=1}^N \lVert \nabla f_{k} (x^*) \rVert^2 \qquad a.s.
\end{equation}
Then, by using the four inequalities \eqref{scgdus_th_convps_eq7d}, \eqref{scgdus_th_convps_eq7}, \eqref{scgdus_th_convps_eq8} and \eqref{scgdus_th_convps_eq9}, we obtain that
\begin{equation}\label{scgdus_th_convps_eq10}
    \E[\lVert D(V_{n+1})\nabla f_{U_{n+1}}(X_n) \rVert^2|\F_n]\leqslant 2m_4(\tau^2(X_n)+ \theta^*) \qquad a.s.,
\end{equation}
where
\[\theta^*=\dfrac{1}{N}\sum_{k=1}^N \lVert \nabla f_{k} (x^*) \rVert^2.\]
Furthermore, from the three contributions \eqref{scgdus_th_convps_eq2}, \eqref{scgdus_th_convps_eq6} and \eqref{scgdus_th_convps_eq10}, one deduces that 
\begin{equation}\label{scgdus_th_convps_eq11}
    \E[T_{n+1}|\F_n]\leqslant T_n-2\gamma_n\langle X_n -x^*,\nabla f(X_n)\rangle +2m_4\gamma_n^2(\tau^2(X_n)+ \theta^*) \qquad a.s. 
\end{equation}
However, Assumption \ref{scgd_cond3} implies that $\tau^2(X_n)\leqslant L T_n$ almost surely. Consequently, we obtain from \eqref{scgdus_th_convps_eq11} that
\begin{equation}\label{scgdus_th_convps_eq12}
    \E[T_{n+1}|\F_n]\leqslant \left(1+2Lm_4\gamma_n^2\right)T_n-2\gamma_n\langle X_n -x^*,\nabla f(X_n)\rangle +2m_4\theta^*\gamma_n^2 \qquad a.s.,
\end{equation}
which can be rewritten as 
\begin{equation*}\label{scgdus_th_convps_eq12b}
    \E[T_{n+1}| \F_n]\leqslant (1+a_n)T_n+\mathcal{A}_n-\mathcal{B}_n \qquad a.s.
\end{equation*}
where $a_n=2Lm_4\gamma_n^2$, $\mathcal{A}_n=2m_4\theta^*\gamma_n^2$ and $\mathcal{B}_n=2\gamma_n\langle  X_n-x^*, \nabla f(X_n)\rangle$.
The four sequences $(T_n)$, $(a_n)$, $(\mathcal{A}_n)$ and $(\mathcal{B}_n)$ are positive sequences of random variables adapted to $(\F_n)$. We clearly have from (\ref{gamma_cond1}) that
\[\sum_{n=1}^\infty a_n < +\infty \qquad \text{and} \qquad \sum_{n=1}^\infty \mathcal{A}_n < +\infty.\]
The proof of theorem is completed by proceeding as in the proof of Theorem 1 in \citep{bercu2024saga}.
\end{proof}

\subsection{Central Limit Theorem}\label{sec:main_results_tlc}
We carry on with the central limit theorem for the SCORS algorithm. In this subsection, we assume that $f$ is twice differentiable. Then, we denote by $H=\nabla^2 f(x^*)$ the Hessian matrix of $f$ at $x^*$.
\begin{assumption} \label{scgd_cond_eig}
Suppose that $f$ is twice differentiable with a unique equilibrium point 
$x^*$ in $\R^d$ such that $\nabla f(x^*) = 0$. Denote by $\rho =\lambda_{min}(H)$ the minimum eigenvalue of $H$. We assume that $\rho>1/2$.
\end{assumption}
\noindent The central limit theorem for the SCORS algorithm is as follows.
\begin{theorem}\label{scgdus_th_tlc}
    Consider that $(X_n)$ is the sequence generated by the SCORS algorithm with decreasing step $\gamma_n=1/n$ and i.i.d. random direction vectors $(V_n)$ sharing the same distribution $\mathcal{P}$ satisfying Assumption \ref{scgd_cond1b_s_policy}. Moreover, suppose that Assumption \ref{scgd_cond_eig} is satisfied and 
\begin{equation}
    \label{scgdus_th_tlc_res1}
    \lim_{n \to +\infty} X_n=x^* \qquad a.s.
\end{equation}
Then, we have the asymptotic normality
    \begin{equation} 
    \label{scgdus_th_tlc_res2}
        \sqrt{n} (X_n -x^*)  \quad \overset{\mathcal{L}}{\underset{n\to +\infty}{\longrightarrow}} \quad \mathcal{N}_d(0,\Sigma)
    \end{equation}
    where the asymptotic covariance matrix is given by
    \[\Sigma=\int_0^\infty (e^{-(H-\I_d/2)u})^T \Gamma e^{-(H-\I_d/2)u} du,\]
   with 
   \begin{equation}\label{scgdus_th_tlc_res4}
       \Gamma=\E[VV^TQVV^T],
   \end{equation}
   and
   \begin{equation}\label{scgdus_th_tlc_res5}
       Q=\dfrac{1}{N}\sum_{k=1}^N \nabla f_k(x^*)\left(\nabla f_k(x^*)\right)^T.
   \end{equation}
\end{theorem}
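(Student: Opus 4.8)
The plan is to cast the SCORS recursion as a classical stochastic approximation scheme with step $\gamma_n = 1/n$ and to invoke a standard central limit theorem for such algorithms (of the Kushner--Clark / Ruppert / Pelletier type, or equivalently the martingale-CLT approach used e.g. in Bercu et al.). First I would rewrite the recursion around $x^*$. Setting $Z_n = X_n - x^*$ and using a first-order Taylor expansion of $\nabla f$ at $x^*$, together with $\E[D(V_{n+1})\nabla f_{U_{n+1}}(X_n)\mid\F_n] = \nabla f(X_n)$ (established in the proof of Theorem \ref{scgdus_th_convps}, eq.~\eqref{scgdus_th_convps_eq6}), I would write
\[
Z_{n+1} = Z_n - \frac{1}{n}\bigl(H Z_n + r_n\bigr) - \frac{1}{n}\,\Delta M_{n+1},
\]
where $r_n = \nabla f(X_n) - H Z_n = o(\|Z_n\|)$ is the Taylor remainder and $\Delta M_{n+1} = D(V_{n+1})\nabla f_{U_{n+1}}(X_n) - \nabla f(X_n)$ is a martingale-difference sequence adapted to $(\F_n)$.

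The heart of the argument is to verify the two hypotheses of the stochastic-approximation CLT: (i) a Lindeberg-type condition plus convergence of the conditional covariance of the noise, namely $\E[\Delta M_{n+1}\Delta M_{n+1}^T\mid\F_n] \to \Gamma$ a.s., and (ii) the stabilization of the drift, i.e. the linearization matrix $H$ has $\lambda_{\min}(H) = \rho > 1/2$ so that $H - \I_d/2$ is stable (all eigenvalues with positive real part), which is exactly Assumption \ref{scgd_cond_eig}. For (i), since $X_n \to x^*$ a.s. by \eqref{scgdus_th_tlc_res1} and the $f_k$ have gradients behaving regularly near $x^*$, one has $\nabla f_{U_{n+1}}(X_n) \to \nabla f_{U_{n+1}}(x^*)$ in the appropriate sense, so $\E[\Delta M_{n+1}\Delta M_{n+1}^T\mid\F_n]$ converges to $\E[D(V)\nabla f_U(x^*)(\nabla f_U(x^*))^T D(V)\mid \cdot]$; using the conditional independence of $V_{n+1}$ and $U_{n+1}$ and that the $V_n$ are i.i.d. with law $\mathcal{P}$, this equals $\E\bigl[VV^T\bigl(\tfrac1N\sum_k \nabla f_k(x^*)\nabla f_k(x^*)^T\bigr)VV^T\bigr] = \E[VV^TQVV^T] = \Gamma$ (note the centering term $\nabla f(X_n)\to 0$ drops out). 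The bounded 4-th conditional moment of $V_n$ from Assumption \ref{scgd_cond1b_s_policy} furnishes the uniform integrability / Lindeberg condition on $\Delta M_{n+1}$. Finally, one concludes $\sqrt{n}\,Z_n \Rightarrow \mathcal{N}_d(0,\Sigma)$ with
\[
\Sigma = \int_0^\infty \bigl(e^{-(H-\I_d/2)u}\bigr)^T \Gamma\, e^{-(H-\I_d/2)u}\, du,
\]
the unique solution of the Lyapunov equation $(H-\tfrac12\I_d)\Sigma + \Sigma(H-\tfrac12\I_d)^T = \Gamma$; the condition $\rho > 1/2$ guarantees this integral converges.

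The main obstacle I expect is the rigorous control of the remainder term $r_n$ and, more subtly, handling the fact that twice-differentiability of $f$ is assumed only at (or near) $x^*$ rather than globally: one must show that the contribution of $\tfrac1n r_n$ to $\sqrt{n} Z_n$ is asymptotically negligible. This typically requires an a priori $\bL^2$-rate of the form $\E[\|Z_n\|^2] = O(1/n)$ (or at least $o(1/\sqrt{n})$ convergence of $\sqrt{n}\|Z_n\|$ in probability), which is plausibly obtained either by a separate argument analogous to the $\bL^p$-rate results announced in Section \ref{sec:main_results_mse}, or by the self-improving bootstrap intrinsic to the CLT proof in \citep{bercu2024saga}. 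I would therefore structure the proof by first deriving the $\bL^2$-rate, then performing the linearization and noise-covariance computation, and finally quoting the stochastic-approximation CLT; the bulk of the novelty over the classical case is simply the explicit evaluation of $\Gamma = \E[VV^TQVV^T]$, which is a direct moment computation once the conditional independence structure is in place.
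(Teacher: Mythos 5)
Your proposal follows essentially the same route as the paper: both decompose the recursion into $\nabla f(X_n)$ plus the martingale-difference noise $D(V_{n+1})\nabla f_{U_{n+1}}(X_n)-\nabla f(X_n)$, show the conditional covariance converges a.s.\ to $\Gamma=\E[VV^TQVV^T]$ via the conditional independence of $V_{n+1}$ and $U_{n+1}$, verify a Lindeberg condition using the bounded fourth moment of $V$, and then quote a stochastic-approximation CLT (the paper uses Theorem 2.3 of \citet{zhang2016central}, under which the explicit linearization and remainder control you worry about are absorbed into the hypotheses, so no separate a priori $\bL^2$ rate is needed). The argument is correct and matches the paper's proof in all essential respects.
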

\begin{proof}
The proof of Theorem \ref{scgdus_th_tlc} can be found in Appendix \ref{app:scgdus_th_tcl}. 
\end{proof}
We provide below the expressions of $\Gamma$ according to the direction distribution choice $\mathcal{P}$. The following result can be found in \citep[p. 6]{chen2024online}.

\begin{proposition}\label{scgdus_prop_tlc} Considering the matrix $Q$ defined in \eqref{scgdus_th_tlc_res5} and under the direction distribution $\mathcal{P}$ listed in Section \ref{sec:framework}, we have the following results,
\begin{enumerate}\label{scgdus_prop_tlc_res1}
    \item[\hyperlink{c_uniform}{$\mathbf{(U)}$}] Uniform in the canonical basis: $\Gamma^{\mathbf{(U)}} =d \, (\diag (Q))$.
    \item[\hyperlink{c_nuniform}{$\mathbf{(NU)}$}] Non-uniform in the canonical basis: $\Gamma^{\mathbf{(NU)}} = \diag\left(Q_{11}/p_1,\dots,Q_{dd}/p_d\right)$.
    \item[\hyperlink{c_gaussian}{$\mathbf{(G)}$}] Gaussian: $\Gamma^{\mathbf{(G)}} =(2Q+tr(Q)\I_d)$.
     \item[\hyperlink{c_spherical}{$\mathbf{(S)}$}] Spherical: $\Gamma^{\mathbf{(S)}} =\dfrac{d}{d+2}(2Q+tr(Q)\I_d)$.
\end{enumerate}
\end{proposition}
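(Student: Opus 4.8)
The plan is to compute the matrix $\Gamma = \E[VV^TQVV^T]$ directly for each of the four distribution choices, using the distributional constraint $\E[D(V)] = \E[VV^T] = \I_d$ together with the known higher moments of each law. I would begin by writing the generic $(i,j)$ entry in index notation: since $(VV^T)_{ik} = V_iV_k$, we have $\Gamma_{ij} = \sum_{k,\ell} Q_{k\ell}\,\E[V_iV_kV_\ell V_j]$, so everything reduces to knowing the fourth-order mixed moments $\E[V_iV_jV_kV_\ell]$ of the sampling distribution. The computation then splits into the four cases.

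\textbf{Cases $\mathbf{(U)}$ and $\mathbf{(NU)}$ (canonical basis).} Here $V$ is supported on scaled basis vectors, so $V_iV_jV_kV_\ell$ is nonzero only when $i=j=k=\ell$. In the uniform case $V = \sqrt{d}\,e_\xi$ with $\xi$ uniform on $[\![1,d]\!]$, so $\E[V_i^4] = d^2\cdot(1/d) = d$ when all indices coincide, giving $\Gamma_{ij} = Q_{ii}\,d\,\ind{i=j}$, i.e. $\Gamma^{\mathbf{(U)}} = d\,\diag(Q)$. In the non-uniform case $V = p_\xi^{-1/2}e_\xi$, so $\E[V_i^4] = p_i^{-2}\cdot p_i = p_i^{-1}$, yielding $\Gamma^{\mathbf{(NU)}} = \diag(Q_{11}/p_1,\dots,Q_{dd}/p_d)$. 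These two are essentially bookkeeping once the support structure is noted.

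\textbf{Case $\mathbf{(G)}$ (Gaussian).} For $V\sim\mathcal{N}(0,\I_d)$, Isserlis/Wick's theorem gives $\E[V_iV_jV_kV_\ell] = \delta_{ij}\delta_{k\ell} + \delta_{ik}\delta_{j\ell} + \delta_{i\ell}\delta_{jk}$. Substituting into $\Gamma_{ij} = \sum_{k,\ell}Q_{k\ell}\E[V_iV_kV_\ell V_j]$ (I match indices carefully: the quadruple is $(i,k,\ell,j)$), the three pairings produce $\delta_{ik}\delta_{\ell j}\!\to\!Q_{ij}$, $\delta_{i\ell}\delta_{kj}\!\to\!Q_{ji}=Q_{ij}$ (by symmetry of $Q$), and $\delta_{ij}\delta_{k\ell}\!\to\!\delta_{ij}\operatorname{tr}(Q)$; summing gives $\Gamma^{\mathbf{(G)}} = 2Q + \operatorname{tr}(Q)\I_d$.

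\textbf{Case $\mathbf{(S)}$ (spherical).} For $V$ uniform on the sphere of radius $\sqrt{d}$, the fourth moments have the same tensor structure as the Gaussian by symmetry of the orthogonal group, namely $\E[V_iV_jV_kV_\ell] = c\,(\delta_{ij}\delta_{k\ell}+\delta_{ik}\delta_{j\ell}+\delta_{i\ell}\delta_{jk})$ for some constant $c$; pinning down $c$ is the only real computation. One way: take $i=j=k=\ell$ and use $\E[V_1^4]$; another: contract indices and use $\E\|V\|^4 = d^2$ together with $\sum_i\E[V_i^2]=d$. From $\E[V_i^2V_j^2]$ one gets $c = d/(d+2)$ (equivalently $\E[V_i^4]=3d/(d+2)$, $\E[V_i^2V_j^2]=d/(d+2)$ for $i\neq j$). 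Since the tensor is exactly $d/(d+2)$ times the Gaussian one, $\Gamma^{\mathbf{(S)}} = \frac{d}{d+2}(2Q+\operatorname{tr}(Q)\I_d)$, consistent with $\Gamma^{\mathbf{(S)}}\to\Gamma^{\mathbf{(G)}}$ as $d\to\infty$.

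The main obstacle is the spherical case: unlike the Gaussian, there is no off-the-shelf Wick formula, so one must either quote the known uniform-on-sphere fourth moments or derive the normalizing constant $c=d/(d+2)$ from scratch (e.g. via the beta distribution of a single squared coordinate, $V_i^2/d \sim \mathrm{Beta}(1/2,(d-1)/2)$, whose moments give $\E[V_i^2]=1$ and $\E[V_i^4]=3d/(d+2)$). Everything else is routine index manipulation. Since the statement explicitly attributes the result to \citet[p.~6]{chen2024online}, the cleanest route is to present the $\mathbf{(G)}$ computation in full as the representative case and then invoke rotational invariance plus the single scalar moment identity to transfer it to $\mathbf{(S)}$, relegating the $\mathbf{(U)}$/$\mathbf{(NU)}$ diagonal computations to a line each.
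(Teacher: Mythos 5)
Your computation is correct in all four cases: the reduction of $\Gamma_{ij}=\sum_{k,\ell}Q_{k\ell}\,\E[V_iV_kV_\ell V_j]$ to fourth-order mixed moments, the single-atom bookkeeping for $\mathbf{(U)}$ and $\mathbf{(NU)}$ (giving $\E[V_i^4]=d$ and $1/p_i$ respectively), the Wick expansion for $\mathbf{(G)}$ yielding $2Q+\operatorname{tr}(Q)\I_d$ after using the symmetry of $Q$, and the isotropic-tensor argument for $\mathbf{(S)}$ with the normalizing constant $c=d/(d+2)$ pinned down by contracting against $\E\lVert V\rVert^4=d^2$ (or via the $\mathrm{Beta}(1/2,(d-1)/2)$ law of $V_1^2/d$). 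However, there is nothing in the paper to compare this against: the paper offers no proof of Proposition~\ref{scgdus_prop_tlc} and simply refers the reader to \citet[p.~6]{chen2024online} for the stated expressions. Your derivation is therefore strictly more than the paper provides --- a self-contained verification of the imported result --- and the structure you propose (full detail for $\mathbf{(G)}$, transfer to $\mathbf{(S)}$ by rotational invariance plus one scalar moment, one line each for the canonical-basis cases) is a reasonable way to present it. The only point worth making explicit if you write it up is the isotropy claim itself, namely that invariance of the spherical law under the orthogonal group forces the fourth-moment tensor to be a multiple of the symmetrized product of Kronecker deltas; once that is stated, the constant $c$ follows exactly as you indicate.
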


The central limit theorem is very useful and provides the theoretical guarantees necessary for building asymptotic confidence intervals based on the normal distribution. This result therefore makes it possible to address questions related to statistical inference such as the hypothesis testing while controlling the variability of algorithms. The point that seems difficult to us here lies on the computation of the asymptotic covariance matrix $\Sigma$ in a high-dimensional context. However, the standard Monte Carlo methods can be useful to obtain an estimation.

Furthermore, from Proposition \ref{scgdus_prop_tlc}, we can draw other interesting conclusions. In fact, one observes that the spherical random directions are always more efficient than the gaussian one. However, \cite{chen2024online} remark that there is no general domination relationship among the other choices. Nevertheless, in the particular case where $Q=\I_d$, we have $\Gamma^{\mathbf{(S)}}=\Gamma^{\mathbf{(U)}}=d \, \I_d$ which means that the spherical and the uniform choices achieve the same asymptotic performances for the central limit theorem. Thus, one can obtain different optimal choices of $\mathcal{P}$ according to the experimental design and the data used.

\subsection{Non-asymptotic $\bL^p$ rates of convergence}\label{sec:main_results_mse}
In this subsection, we are interesting by the non-asymptotic $\bL^p$ rates of convergence of the SCORS algorithm.
Hence, our goal is to investigate, for all integer $p\geqslant 1$, the convergence rate of $\E[\lVert X_n-x^*\rVert^{2p}]$ for the SCORS algorithm where the decreasing step is defined, for all $n \geqslant1$ by,
\begin{equation}\label{gamma_cond2}
    \gamma_n=\dfrac{c}{n^\alpha},
\end{equation}
for some positive constant $c$ and $1/2<\alpha \leqslant 1$.
\begin{assumption}\label{scgd_cond4}
    Assume there exists a positive constant $\mu$ such that for all $x\in \R^d $ with $x\neq x^*$,
    \begin{equation*}
        \langle x-x^*,\nabla f(x)\rangle \geqslant\mu\lVert x-x^* \rVert^2.
    \end{equation*}
\end{assumption}
First of all, we focus on the case $p=1$ in Theorem \ref{scgdus_th_mse1} before extending this result for any integer $p\geqslant 1$ in Theorem \ref{scgdus_th_mse2}.
\begin{theorem}\label{scgdus_th_mse1}
    Consider that $(X_n)$ is the sequence generated by the SCORS algorithm with decreasing step sequence $(\gamma_n)$ defined by (\ref{gamma_cond2}). Suppose that Assumptions \ref{scgd_cond1b_s_policy}, \ref{scgd_cond1}, \ref{scgd_cond3} and \ref{scgd_cond4} are satisfied with $c\mu\leqslant 2^{\alpha-1}$ and $2 c\mu>1$ if $\alpha=1$.
    Then, there exists a positive constant $K$ such that for all $n \geqslant 1$,
    \begin{equation}\label{scgdus_th_mse1_res1}
        \E\big[\lVert X_n-x^*\rVert^2\big]\leqslant \dfrac{K}{n^\alpha}.
    \end{equation}
\end{theorem}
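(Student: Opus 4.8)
The plan is to run the same Lyapunov/Robbins--Siegmund type recursion as in the almost sure convergence proof, but track the $\bL^1$ norm explicitly rather than passing to conditional expectations and stopping. Starting from \eqref{scgdus_th_convps_eq2} and taking full expectations after conditioning on $\F_n$, I would use \eqref{scgdus_th_convps_eq6} together with Assumption \ref{scgd_cond4} to bound the cross term, and \eqref{scgdus_th_convps_eq10} together with Assumption \ref{scgd_cond3} (which gives $\tau^2(X_n)\leqslant L T_n$) to bound the quadratic term. Writing $u_n=\E[\lVert X_n-x^*\rVert^2]$, this yields a scalar recursion of the form
\[
u_{n+1}\leqslant \bigl(1-2\mu\gamma_n+2Lm_4\gamma_n^2\bigr)u_n+2m_4\theta^*\gamma_n^2,
\]
valid for all $n\geqslant 1$, with $\gamma_n=c/n^\alpha$.

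Next I would establish \eqref{scgdus_th_mse1_res1} by induction on $n$, choosing the constant $K$ appropriately. Let $v_n = 1-2\mu\gamma_n+2Lm_4\gamma_n^2$. For $n$ large enough the negative linear term dominates the positive quadratic one, so $v_n\leqslant 1-\mu\gamma_n$ eventually; one then compares $v_n\cdot K n^{-\alpha}+2m_4\theta^*\gamma_n^2$ with $K(n+1)^{-\alpha}$. The elementary inequality $(n+1)^{-\alpha}\geqslant n^{-\alpha}(1-\alpha/n)$ (or more precisely $n^{-\alpha}-(n+1)^{-\alpha}\leqslant \alpha n^{-\alpha-1}$) reduces the induction step to checking that $\mu c\, K n^{-\alpha-1}$ exceeds $2m_4\theta^* c^2 n^{-2\alpha}+\alpha K n^{-\alpha-1}$ up to lower-order terms, i.e. to ensuring $(\mu c-\alpha)K\geqslant 2m_4\theta^* c^2$ when $\alpha<1$, which holds once $K$ is taken large; the hypotheses $c\mu\leqslant 2^{\alpha-1}$ and $2c\mu>1$ for $\alpha=1$ are exactly what make the analogous inequality work in the borderline case $\alpha=1$, where one instead needs $2\mu c>1$ so that $1-2\mu c<-1$ beats the $n^{-2}$ term against $n^{-1}\cdot n^{-1}$. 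The base case is handled by enlarging $K$ so that $u_1\leqslant K$, and more generally so that the inequality holds for all $n$ below the threshold where $v_n\leqslant 1-\mu\gamma_n$ kicks in (a finite set, so a finite enlargement of $K$ suffices).

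The main obstacle is the careful bookkeeping in the induction step, particularly in the critical regime $\alpha=1$: there the recursion coefficient and the forcing term are both of order $n^{-1}$ against $n^{-2}$, so the constant $K$ cannot be chosen freely — it is the condition $2c\mu>1$ that guarantees the linear contraction rate $2\mu c$ strictly exceeds the exponent $1$ appearing from $(n+1)^{-1}$ vs $n^{-1}$, which is precisely what is needed to close the induction. One must also verify that the condition $c\mu\leqslant 2^{\alpha-1}$ ensures $v_n\in[0,1)$ (or at least $v_n\geqslant 0$) for all $n\geqslant 1$, so that the recursion never flips sign and the monotone comparison argument is legitimate from the start. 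Once these constants are pinned down, the rest is a routine telescoping/induction argument.
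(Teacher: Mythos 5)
Your overall strategy is viable and genuinely differs from the paper's in two respects. Where you use Assumption \ref{scgd_cond3} to absorb the variance term into the contraction coefficient, obtaining $u_{n+1}\leqslant(1-2\mu\gamma_n+2Lm_4\gamma_n^2)u_n+2m_4\theta^*\gamma_n^2$, the paper keeps the clean coefficient $1-2\mu\gamma_n$ and instead bounds $\E[\tau^2(X_n)]$ by a constant $b_1$ via an auxiliary result (Corollary \ref{scgdus_cor_mse2a} in Appendix \ref{app:scgdus_th_mse2a}, which gives $\sup_n\E[T_n]<+\infty$), so the variance contribution lands entirely in the additive $\gamma_n^2$ term. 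The paper then concludes by invoking a ready-made recursion lemma (Lemma A.3 of \citep{bercu2021asymptotic}), whose hypotheses are precisely $2c\mu\leqslant 2^{\alpha}$ and $2c\mu>1$ when $\alpha=1$; you re-prove that lemma by induction. Your route is more self-contained, at the price of having to track the $\gamma_n^2$ perturbation of the coefficient.

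There is, however, a step that fails as written in the critical case $\alpha=1$. You first replace $v_n=1-2\mu\gamma_n+2Lm_4\gamma_n^2$ by $1-\mu\gamma_n$, which halves the contraction rate to $\mu c$; closing the induction against $K/n$ then requires $\mu c>1$, whereas the theorem only assumes $2c\mu>1$ (and the hypothesis $c\mu\leqslant 2^{\alpha-1}$ in fact forces $\mu c\leqslant 1$ when $\alpha=1$), so the range $1/2<\mu c\leqslant 1$ is permitted and your reduced rate cannot close the induction there. You later correctly identify $2\mu c>1$ as the relevant condition, but that is inconsistent with the earlier reduction: you must retain $v_n\leqslant 1-(2\mu c-\epsilon)n^{-1}$ for $n$ large, with $\epsilon$ small enough that $2\mu c-\epsilon>1$. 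Relatedly, your bookkeeping for $\alpha<1$, namely $(\mu c-\alpha)K\geqslant 2m_4\theta^*c^2$, compares terms of different orders: since $2\alpha<\alpha+1$, the forcing term $n^{-2\alpha}$ dominates the telescoping error $\alpha Kn^{-\alpha-1}$, so no condition of the form $\mu c>\alpha$ arises; for $n$ large the step closes once $K$ exceeds a multiple of $m_4\theta^*c/\mu$, and the finitely many initial indices are handled by enlarging $K$. Finally, non-negativity of $v_n$ is not actually needed for the induction (if $v_n<0$ the step is trivial since $u_n\geqslant 0$), so the role you assign to $c\mu\leqslant 2^{\alpha-1}$ is inessential in your argument; in the paper that condition is simply a hypothesis of the cited lemma.
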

\begin{proof}
The proof of Theorem \ref{scgdus_th_mse1} can be found in Appendix \ref{app:scgdus_th_mse1}. 
\end{proof}

\begin{assumption}\label{scgd_cond5}
    Assume that for some integer $p\geqslant 1$, there exists a positive constant $L_p$ such that for all $x\in \R^d$,
\[\dfrac{1}{N}\sum_{k=1}^N \lVert \nabla f_{k} (x)-\nabla f_{k} (x^*) \rVert^{2p} \leqslant L_p \lVert x -x^*\rVert^{2p}.\]
Moreover, suppose that there exists a positive constant $m_{4p}$ such that for all $n\geqslant 1$,
\begin{equation}\label{scgd_cond5_eq2}
    \E[\lVert V_{n+1} \rVert^{4p}|\F_n]\leqslant m_{4p}.
\end{equation}
\end{assumption}
\begin{theorem}\label{scgdus_th_mse2}
    Consider that $(X_n)$ is the sequence generated by the SCORS algorithm with decreasing step sequence $(\gamma_n)$ defined by (\ref{gamma_cond2}) and such that the initial state $X_1$ belongs to $\bL^{2p}$. Suppose that Assumptions \ref{scgd_cond1b_s_policy}, \ref{scgd_cond1}, \ref{scgd_cond4} and \ref{scgd_cond5} are satisfied with $p c\mu\leqslant 2^\alpha$ and $c\mu>1$ if $\alpha=1$.
    Then, there exists a positive constant $K_p$ such that for all $n \geqslant 1$,
    \begin{equation}\label{scgdus_th_mse2_res1}
        \E\big[\lVert X_n-x^*\rVert^{2p}\big]\leqslant \dfrac{K_p}{n^{p\alpha}}.
    \end{equation}
\end{theorem}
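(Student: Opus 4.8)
The plan is to run a strong induction on the integer $p$, again using the Lyapunov function $T_n=\lVert X_n-x^*\rVert^2$, so that $\E[T_n^p]=\E[\lVert X_n-x^*\rVert^{2p}]$. The base case $p=1$ follows from the recursion argument detailed below (this is essentially the content of Theorem \ref{scgdus_th_mse1}). For the induction step, one first observes that the hypotheses at every intermediate level $m\in\{1,\dots,p-1\}$ are implied by those at level $p$: the power-mean inequality on the uniform measure over $\{1,\dots,N\}$ converts the $\lVert\cdot\rVert^{2p}$-growth bound of Assumption \ref{scgd_cond5} and the bound \eqref{scgd_cond5_eq2} into their order-$2m$ and order-$4m$ counterparts, $mc\mu\leqslant 2^\alpha$ follows from $pc\mu\leqslant 2^\alpha$, and $\bL^{2p}\subseteq\bL^{2m}$. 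Hence the induction hypothesis yields $\E[T_n^m]\leqslant K_m/n^{m\alpha}$ for all $m\in\{0,\dots,p-1\}$ and all $n\geqslant 1$, and Hölder's inequality extends this to $\E[T_n^s]\leqslant K_s/n^{s\alpha}$ for every real $s\in[0,p-1]$. A separate induction on $n$, using $X_1\in\bL^{2p}$ and the conditional moment bounds on $V_n$, ensures $\E[T_n^p]<\infty$ for every $n$, so that the recursions below are meaningful.

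I would then expand the one-step increment. Writing $T_{n+1}=T_n+A_n+B_n$ with $A_n=-2\gamma_n\langle X_n-x^*,D(V_{n+1})\nabla f_{U_{n+1}}(X_n)\rangle$ and $B_n=\gamma_n^2\lVert D(V_{n+1})\nabla f_{U_{n+1}}(X_n)\rVert^2$, the binomial formula gives $T_{n+1}^p=T_n^p+\sum_{j=1}^p\binom{p}{j}T_n^{p-j}(A_n+B_n)^j$. The drift comes from the $j=1$ term: by \eqref{scgdus_th_convps_eq6} and Assumption \ref{scgd_cond4}, $\E[A_n\mid\F_n]=-2\gamma_n\langle X_n-x^*,\nabla f(X_n)\rangle\leqslant-2\mu\gamma_n T_n$, which contributes $-2p\mu\gamma_n T_n^p$. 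All other contributions are remainders: for $B_n$ one uses \eqref{scgdus_th_convps_eq10} (valid here with $L=L_p^{1/p}$ by power-mean); for the $j\geqslant 2$ terms one expands $(A_n+B_n)^j=\sum_i\binom{j}{i}A_n^iB_n^{j-i}$, bounds $\lvert A_n\rvert\leqslant 2\gamma_n\lVert X_n-x^*\rVert\lVert V_{n+1}\rVert^2\lVert\nabla f_{U_{n+1}}(X_n)\rVert$ and $B_n\leqslant\gamma_n^2\lVert V_{n+1}\rVert^4\lVert\nabla f_{U_{n+1}}(X_n)\rVert^2$ by Cauchy–Schwarz as in \eqref{scgdus_th_convps_eq7a}–\eqref{scgdus_th_convps_eq7b}, and takes conditional expectations using the conditional independence of $V_{n+1}$ and $U_{n+1}$, the boundedness of the conditional moments of $\lVert V_{n+1}\rVert$ up to order $4p$, and the power-mean consequence $\tfrac1N\sum_k\lVert\nabla f_k(X_n)\rVert^{r}\leqslant C(T_n^{r/2}+1)$ of Assumption \ref{scgd_cond5} for $r\leqslant 2p$. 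This yields a bound of the form
\[\E[T_{n+1}^p\mid\F_n]\leqslant\big(1-2p\mu\gamma_n+C_1\gamma_n^2\big)T_n^p+\sum_{(q,s)\in\mathcal{R}}C_{q,s}\,\gamma_n^{q}\,T_n^{s},\]
where $\mathcal{R}$ is finite and every remainder exponent satisfies $0\leqslant s\leqslant p-1$ and $q+s\geqslant p+1$; the delicate point is that the cross terms $A_n^iB_n^{j-i}$ with $i<j$, which carry half-integer powers of $T_n$, still obey these constraints.

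Taking expectations and inserting the induction hypothesis, each remainder is at most a constant times $\gamma_n^q/n^{s\alpha}=c^q/n^{(q+s)\alpha}\leqslant c^q/n^{(p+1)\alpha}$, so with $v_n:=\E[T_n^p]$ one obtains the scalar recursion
\[v_{n+1}\leqslant\Big(1-\frac{2pc\mu}{n^\alpha}+\frac{C_1c^2}{n^{2\alpha}}\Big)v_n+\frac{C_2}{n^{(p+1)\alpha}}.\]
It remains to deduce $v_n\leqslant K_p/n^{p\alpha}$, which I would do by an induction on $n$: choose $n_0$ so that the bracket lies in $(0,1)$ for $n\geqslant n_0$, choose $K_p$ large enough to absorb $v_1,\dots,v_{n_0}$, and verify the inductive step by comparing $n^{-p\alpha}$ with $(n+1)^{-p\alpha}$; the assumptions $pc\mu\leqslant 2^\alpha$ and $c\mu>1$ when $\alpha=1$ are exactly what makes this step close, in the same way as in the proof of Theorem \ref{scgdus_th_mse1}.

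The main obstacle I anticipate is the expansion in the second paragraph: certifying that \emph{every} term produced by the binomial expansion of $T_{n+1}^p$ — in particular all cross terms $A_n^iB_n^{j-i}$ and the half-integer powers of $T_n$ they generate — is of the form $\gamma_n^q T_n^s$ with $q+s\geqslant p+1$ and $s\leqslant p-1$. This is precisely the accounting that allows the induction on $p$ to reinject the lower moments $\E[T_n^s]$ without loss; once the scalar recursion is established, the final induction on $n$ is routine bookkeeping with the constants.
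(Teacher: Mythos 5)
The paper does not actually print a proof of Theorem \ref{scgdus_th_mse2}; it delegates to Theorem 4 of \citet{bercu2024saga}, whose argument is exactly the one you reconstruct: induction on $p$, binomial expansion of $T_{n+1}^p$ around the drift term $-2p\mu\gamma_nT_n^p$, reinjection of the lower-order moments, and a scalar recursion closed by a Bercu--Bigot-type lemma. So your architecture is the intended one, and most of the supporting observations (power-mean reduction of Assumption \ref{scgd_cond5} to lower orders, $\bL^{2p}\subseteq\bL^{2m}$, the separate induction on $n$ to guarantee finiteness of $\E[T_n^p]$) are correct.

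There is, however, one assertion that is false as stated, and it is precisely the point you flag as ``delicate'' without resolving it: the claim that every remainder term $\gamma_n^qT_n^s$ has $s\leqslant p-1$. It does not. Bounding $\lvert A_n\rvert\leqslant 2\gamma_n\lVert X_n-x^*\rVert\,\lVert V_{n+1}\rVert^2\lVert\nabla f_{U_{n+1}}(X_n)\rVert$ and $\lVert\nabla f_{U_{n+1}}(X_n)\rVert$ by $\lVert\nabla f_{U_{n+1}}(X_n)-\nabla f_{U_{n+1}}(x^*)\rVert+\lVert\nabla f_{U_{n+1}}(x^*)\rVert$ makes $A_n^i$ of order $\gamma_n^iT_n^{i/2}(T_n^{1/2}+\theta)^i$, so for instance $T_n^{p-2}A_n^2$ produces a term of order $\gamma_n^2T_n^{p-1/2}$, whose exponent exceeds $p-1$; your induction hypothesis (even extended by H\"older to real $s\in[0,p-1]$) does not control $\E[T_n^{p-1/2}]$. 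The missing ingredient is the elementary interpolation $T_n^{k+1/2}\leqslant\tfrac12\bigl(T_n^{k}+T_n^{k+1}\bigr)$ applied to every half-integer exponent: the top piece $T_n^p$ always comes with $q\geqslant 2$ and is absorbed into the bracket $\bigl(1-2p\mu\gamma_n+C_1\gamma_n^2\bigr)$, while the bottom piece has integer exponent $s\leqslant p-1$; and since for the half-integer terms $q+s$ is a half-integer $\geqslant p+\tfrac32$, the rounded-down piece still satisfies $q+s\geqslant p+1$, so the bookkeeping closes. With this one-line repair your scalar recursion and the final induction on $n$ go through. A last minor point: your base case cites Theorem \ref{scgdus_th_mse1}, whose step-size condition $c\mu\leqslant 2^{\alpha-1}$ is not implied by $pc\mu\leqslant 2^{\alpha}$ when $p=1$; this is harmless because your own recursion re-derives the $p=1$ bound under the stated constants, but you should not invoke Theorem \ref{scgdus_th_mse1} verbatim.
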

\noindent The proof of Theorem \ref{scgdus_th_mse2} is left to the reader as it follows the same lines as the proof of Theorem 4 in \cite{bercu2024saga}.
\section{Numerical Experiments}\label{sec:experiments}
The purpose of this section is to show the behavior of the SCORS algorithm on simulated data. For that goal, we consider the logistic regression model
\citep{bach2014adaptivity, bercu2020efficient} associated with the classical minimization problem (\ref{problem1}) of the convex function $f$ given, for all $x\in \R^d$, by

\begin{equation*}\label{num_eq1}
    f(x)=\dfrac{1}{N}\sum_{k=1}^N f_k(x)=\dfrac{1}{N}\sum_{k=1}^N \left(
    \log (1+\exp (\langle x, w_k\rangle))-y_k\langle x, w_k\rangle\right),
\end{equation*}
where $x\!\in\!\R^d$ is a vector of unknown parameters, $w_k\!\in\! \R^d$ is a vector of features and the binary output $y_k\in\{0,1\}$. One can easily see that the gradient of $f$ is given by
\begin{equation}\label{num_eq2}
    \nabla f(x)=\dfrac{1}{N}\sum_{k=1}^N \nabla f_k(x)=\dfrac{1}{N}\sum_{k=1}^N \left( \dfrac{1}{1+\exp (-\langle x, w_k\rangle)}-y_k \right) w_k.
\end{equation}
In the same way as \citet{chen2024online}, we carry out the experiments on simulated data in order to illustrate the almost sure convergence (Theorem \ref{scgdus_th_convps}), the central limit theorem (Theorem \ref{scgdus_th_tlc}) and the non-asymptotic $\bL^2$ rate of convergence (Theorem \ref{scgdus_th_mse1}). Therefore, we consider an independent and identically distributed collection $\{(w_1,y_1),\dots,(w_N,y_N)\}$ where the covariate $w \sim \mathcal{N}_d(0,\I_d)$ and the response $y\in\{0,1\}$ is sampled such that
\begin{equation}
    \P(y=1\lvert w)=\dfrac{1}{1+e^{-\langle w,x^{*}\rangle}}.
\end{equation}
The true model parameter $x^{*}\in \R^d$ is selected uniformly from the unit sphere. Furthermore, we set the sample size $N=50000$ and the parameter dimension $d = 50$. The stepsize $\gamma_n=1/n$ is used where $n\geqslant 1$ stands for the iterations.\\
Here, we will compare the four methods \hyperlink{c_uniform}{$\mathbf{(U)}$}, \hyperlink{c_nuniform}{$\mathbf{(NU)}$}, \hyperlink{c_gaussian}{$\mathbf{(G)}$} and \hyperlink{c_spherical}{$\mathbf{(S)}$} described in Section \ref{sec:framework}. Let us define the initial value $g_{1,k}$ given, for any $k=1,\dots,N$, by $g_{1,k}= \nabla f_k(X_1)$. Moreover, the sequence $(g_{n,k})$ is updated, for all $n\geqslant 1$ and $1\leqslant k\leqslant N$, as
\begin{equation}\label{def_gnk}
    g_{n+1,k}=\left\{ \begin{array}{ll}
     \nabla f_k(X_n) & \, \text{if } \, U_{n+1}=k, \vspace{0.2cm}\\
     g_{n,k} & \text{ otherwise}.
\end{array}\right.
\end{equation}
For the non-uniform search distribution \hyperlink{c_nuniform}{$\mathbf{(NU)}$}, the probabilities $p_{n,j}$ are computed for all $n\geqslant 2$ and $\; j \in [\![1,d]\!]$, as follows
\begin{equation}\label{def_pnj}
    p_{n,j}=\left\{ \begin{array}{ll}
     \dfrac{\lvert g_1^{(j^{*})} \rvert}{\sum_{i=1}^d \lvert g_1^{(i)} \rvert} & \, \text{if } \, j=j^{*}, \vspace{0.2cm}\\
     \dfrac{1-p_{n,j^{*}}}{d-1} & \text{ otherwise},
\end{array}\right.
\end{equation}
where $\displaystyle{j^{*}=\argmax_{j=1,\dots,d} \; \lvert g_1^{(j)} \rvert}$ and $g_1=\sum_{k=1}^N g_{1,k}$. 

\noindent The idea behind this choice is to select with the highest probability, the coordinate of a recursive estimate of the gradient which has the highest norm, since the goal is to reduce the gradient at each iteration in order to make it converge towards 0.

Figure \ref{fig:dist_conv_ps} illustrates the almost sure convergence of the algorithms. This graph represents the relative optimality gap function $t\mapsto \lVert X_t -x^{*} \rVert \times \lVert X_1 -x^{*} \rVert^{-1}$, where $t$ stands for the number of the gradient coordinates computed. This criterion is chosen in order to make an honest comparison of the algorithms in competition. For instance, the SGD algorithm computes all $d$ coordinates of the gradient at each iteration, while the SCORS algorithm with uniform search distribution \hyperlink{c_uniform}{$\mathbf{(U)}$} uses just a single coordinate. Thus, we made this choice to better take this principle into account.

\begin{figure}[H]
    \centering
    \begin{subfigure}[b]{0.49\textwidth}
        \includegraphics[width=\textwidth]{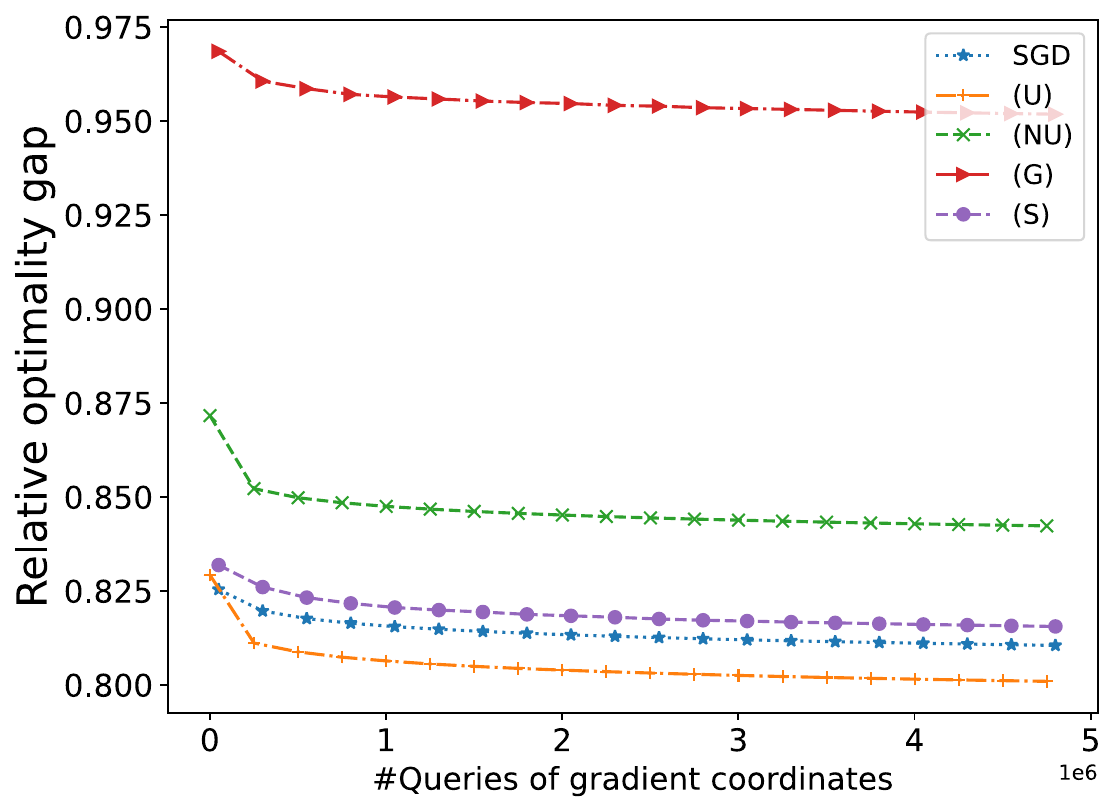}
        \caption{All methods}
    \end{subfigure}
    \hfill
    \begin{subfigure}[b]{0.49\textwidth}
        \includegraphics[width=\textwidth]{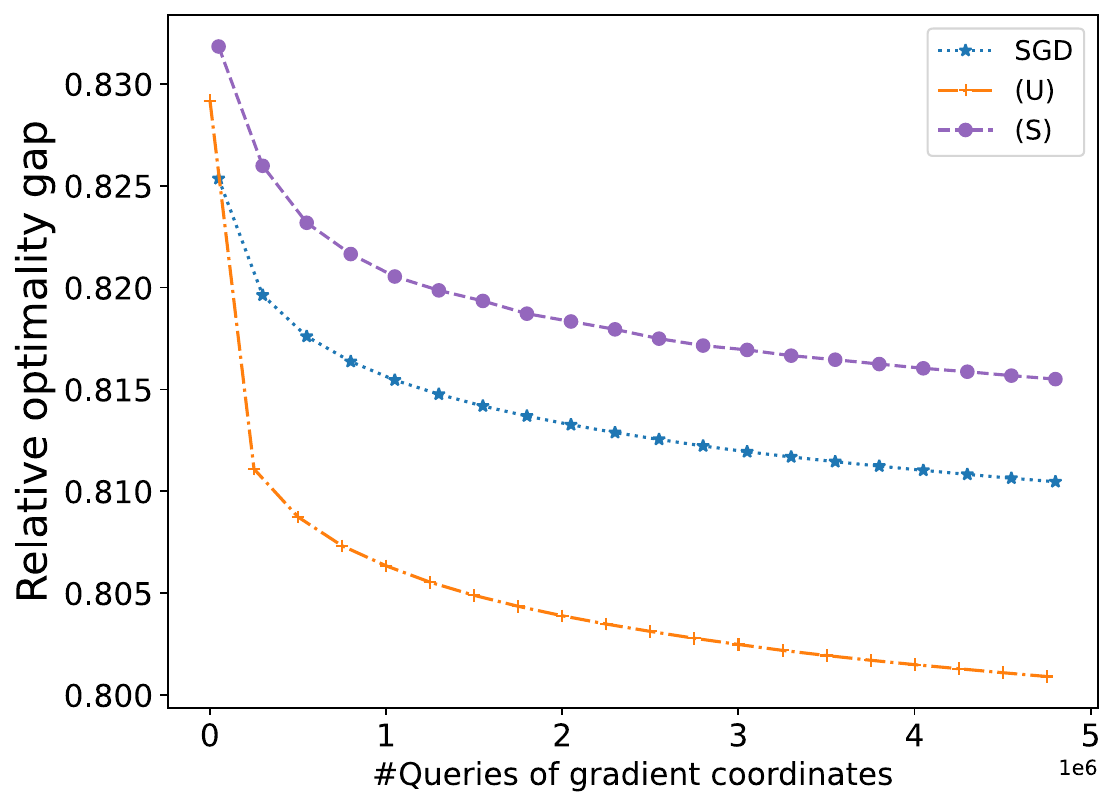}
        \caption{$\mathbf{(U)}$, $\mathbf{(S)}$ and SGD}
    \end{subfigure}
\caption{Almost sure convergence of the algorithms with $\gamma_n=1/n$.}
\label{fig:dist_conv_ps}
\end{figure}

\noindent As shown from the previous graph, the SCORS algorithm with uniform search distribution \hyperlink{c_uniform}{$\mathbf{(U)}$} is the best for this decreasing step. It is therefore interesting to see that the SCORS algorithm can achieve better performance than the SGD algorithm in terms of almost sure convergence. Moreover, we observe that the spherical distribution choice and the SGD algorithm are close. However, the gaussian choice is clearly the worst among all the methods in competition. Furthermore, the results concerning the central limit theorem of the \hyperlink{c_uniform}{$\mathbf{(U)}$}, \hyperlink{c_nuniform}{$\mathbf{(NU)}$}, \hyperlink{c_gaussian}{$\mathbf{(G)}$} and \hyperlink{c_spherical}{$\mathbf{(S)}$} algorithms are illustrated by Figure \ref{fig:all_graph_tlc}. We consider the same distributional convergence as \cite{bercu2024saga} and use the standard Monte Carlo method to estimate the asymptotic standard deviation of each algorithm. We therefore observe that these simulations confirm the asymptotic normal distribution of the SCORS algorithms with all the search distribution choices considered. Note also that the \hyperlink{c_uniform}{$\mathbf{(U)}$} method has the smallest asymptotic standard deviation for the large dimension $d=50$.

\begin{figure}[H]
    \centering
    \begin{subfigure}[b]{0.49\textwidth}
        \includegraphics[width=\textwidth]{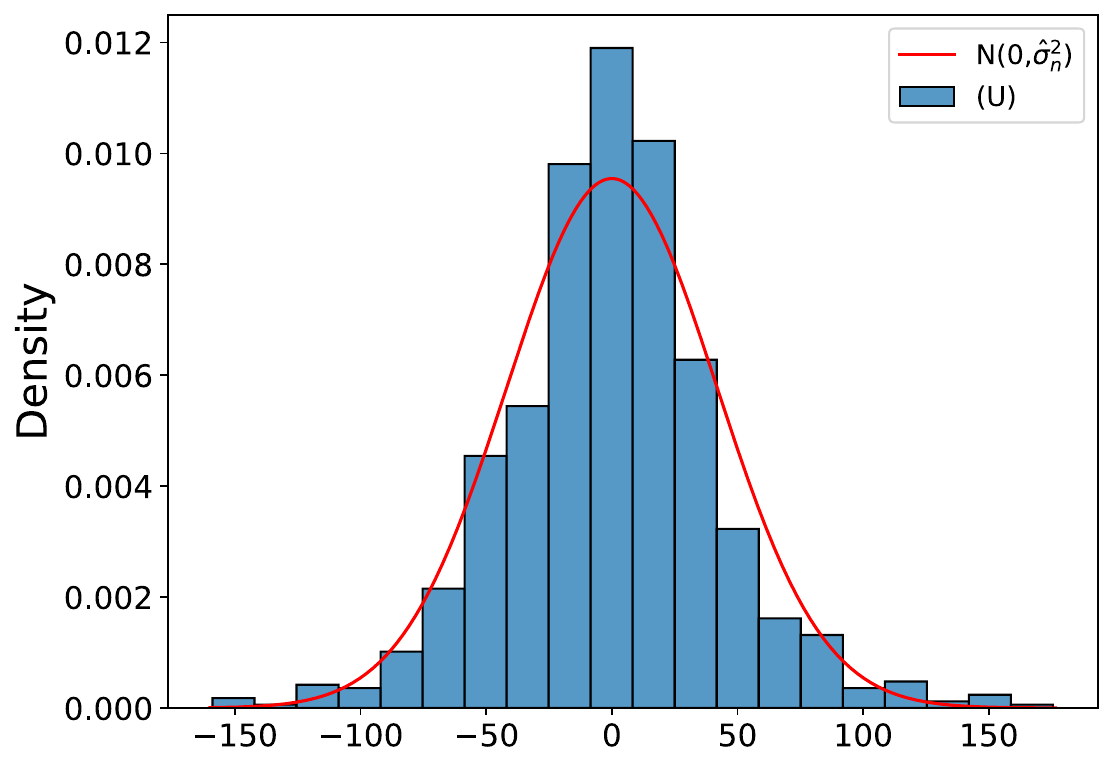}
        \caption{\hyperlink{c_uniform}{$\mathbf{(U)}$}, where $\hat{\sigma}_n \approx 41.792 $}
    \end{subfigure}
    \hfill
    \begin{subfigure}[b]{0.49\textwidth}
        \includegraphics[width=\textwidth]{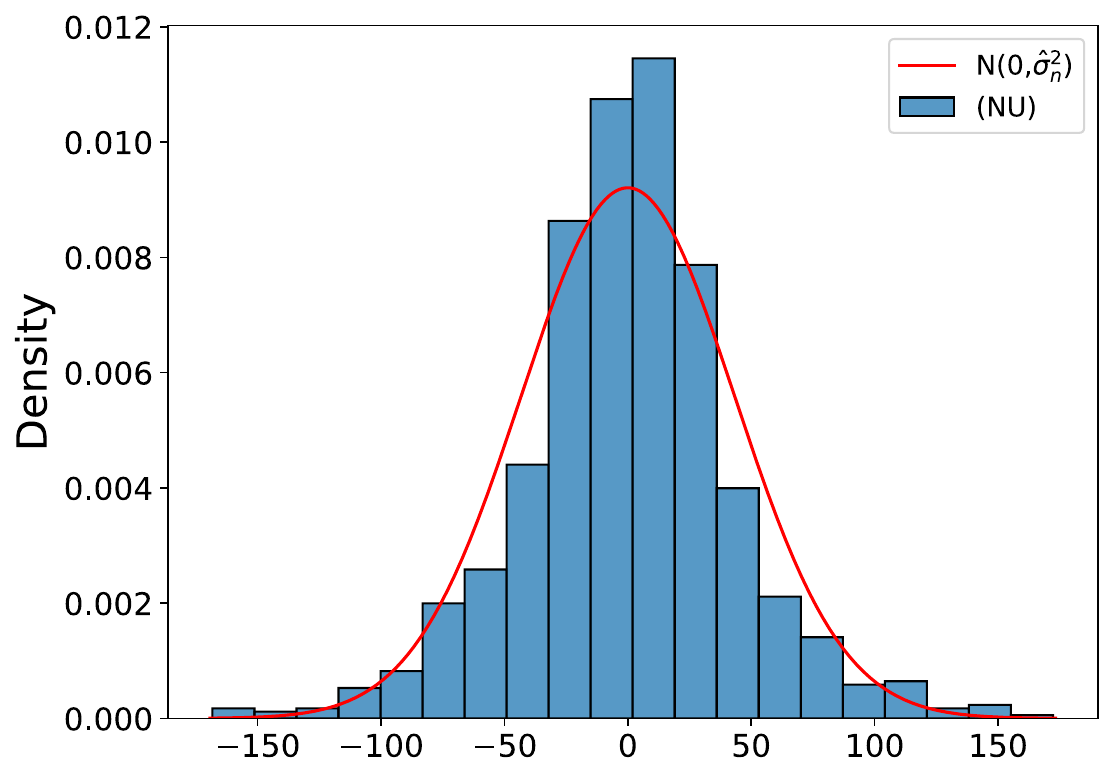}
        \caption{\hyperlink{c_nuniform}{$\mathbf{(NU)}$}, where
        $\hat{\sigma}_n \approx 43.325 $}
    \end{subfigure}
    \hfill
    \vspace{2cm}
    \\
    \centering
    \begin{subfigure}[b]{0.49\textwidth}
        \includegraphics[width=\textwidth]{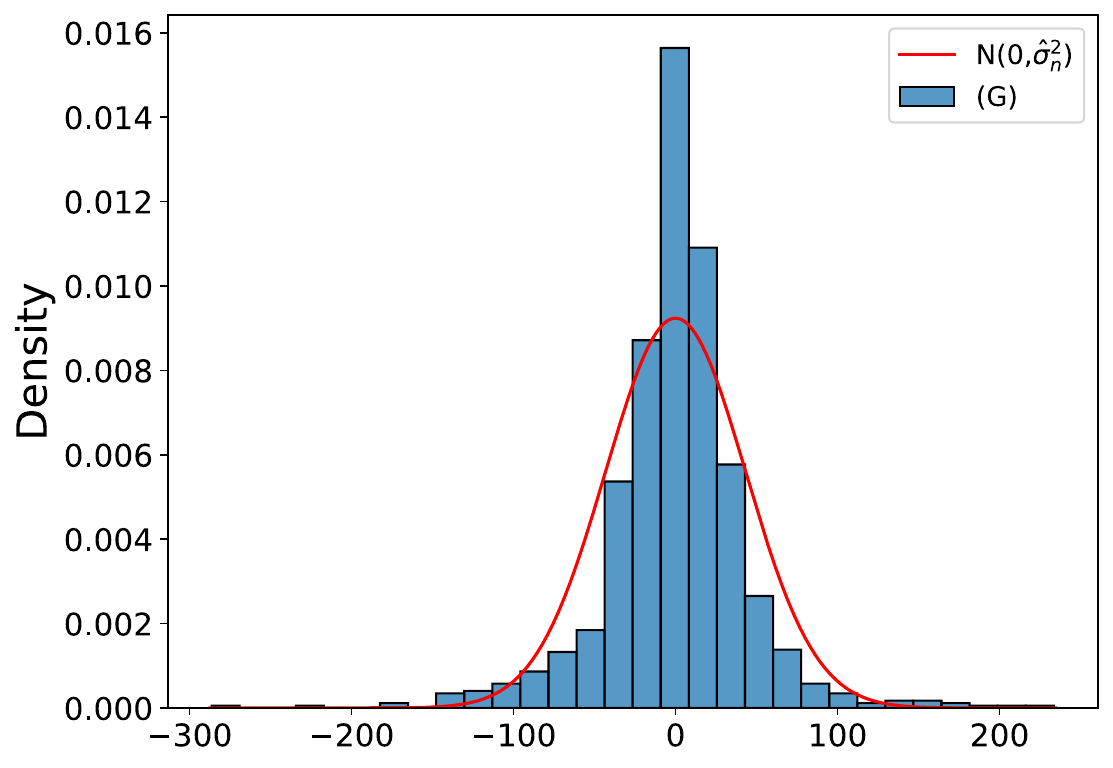}
        \caption{\hyperlink{c_gaussian}{$\mathbf{(G)}$}, where $\hat{\sigma}_n \approx 43.193 $}
    \end{subfigure}
    \hfill
    \begin{subfigure}[b]{0.49\textwidth}
        \includegraphics[width=\textwidth]{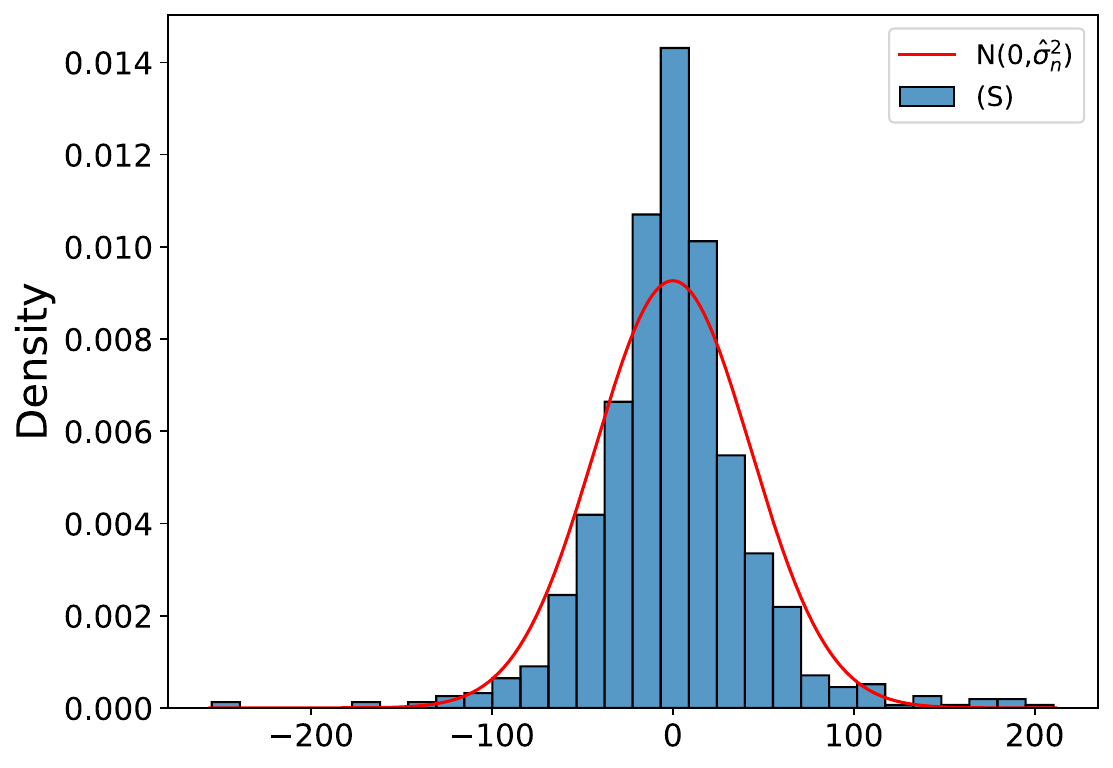}
        \caption{\hyperlink{c_spherical}{$\mathbf{(S)}$}, where $\hat{\sigma}_n \approx 43.054 $}
    \end{subfigure}
    \hfill
    \caption{We used 1000 samples, where each one was obtained by running the associated algorithm for $n=500000$ iterations.}
    \label{fig:all_graph_tlc}
\end{figure}

In Table \ref{table:performance}, we report a comparison of the algorithm performances based on the computational cost. This criterion is estimated here by the average CPU time per iteration after running 5000000 iterations.
\vspace{3.5cm}

\begin{table}[t]
\caption{CPU time per iteration (s).}
\label{table:performance}
\begin{center}
\begin{tabular}{c| c c c c c} 
 \toprule
 Algorithms & SGD & \hyperlink{c_uniform}{$\mathbf{(U)}$} &  \hyperlink{c_nuniform}{$\mathbf{(NU)}$} & \hyperlink{c_gaussian}{$\mathbf{(G)}$} & \hyperlink{c_spherical}{$\mathbf{(S)}$} \\ [0.5ex] 
 \hline\hline
 CPU time& $ 5.05\times 10^{-6}$ & $ 4.98\times 10^{-6}$ & $ 12.01 \times 10^{-6}$& $ 6.48\times 10^{-6}$ & $ 8.92\times 10^{-6}$ \\ [1ex] 
 \bottomrule
\end{tabular}
\end{center}
\end{table}

As expected, the SCORS algorithm with uniform search distribution \hyperlink{c_uniform}{$\mathbf{(U)}$} is the fastest in terms of computational time per iteration. Next come respectively the SGD, \hyperlink{c_gaussian}{$\mathbf{(G)}$}, \hyperlink{c_spherical}{$\mathbf{(S)}$} and \hyperlink{c_nuniform}{$\mathbf{(NU)}$} methods. Nevertheless, the difference between the CPU times per iteration of the SGD and \hyperlink{c_uniform}{$\mathbf{(U)}$} algorithms is not very significant here, which is explained by the particular form of the gradient in this specific case of the logistic regression \eqref{num_eq2}.

Lastly, Figure \ref{fig:L_p_norm_evolution} provides approximate results on the non-asymptotic $\bL^2$ rate of convergence. This graph just gives an idea on how the mean squared error of the algorithms decreases when $n$ goes to infinity. Each epoch consists of running 1000 iterations.

\begin{figure}[H]
\centering
\includegraphics[scale=0.5]{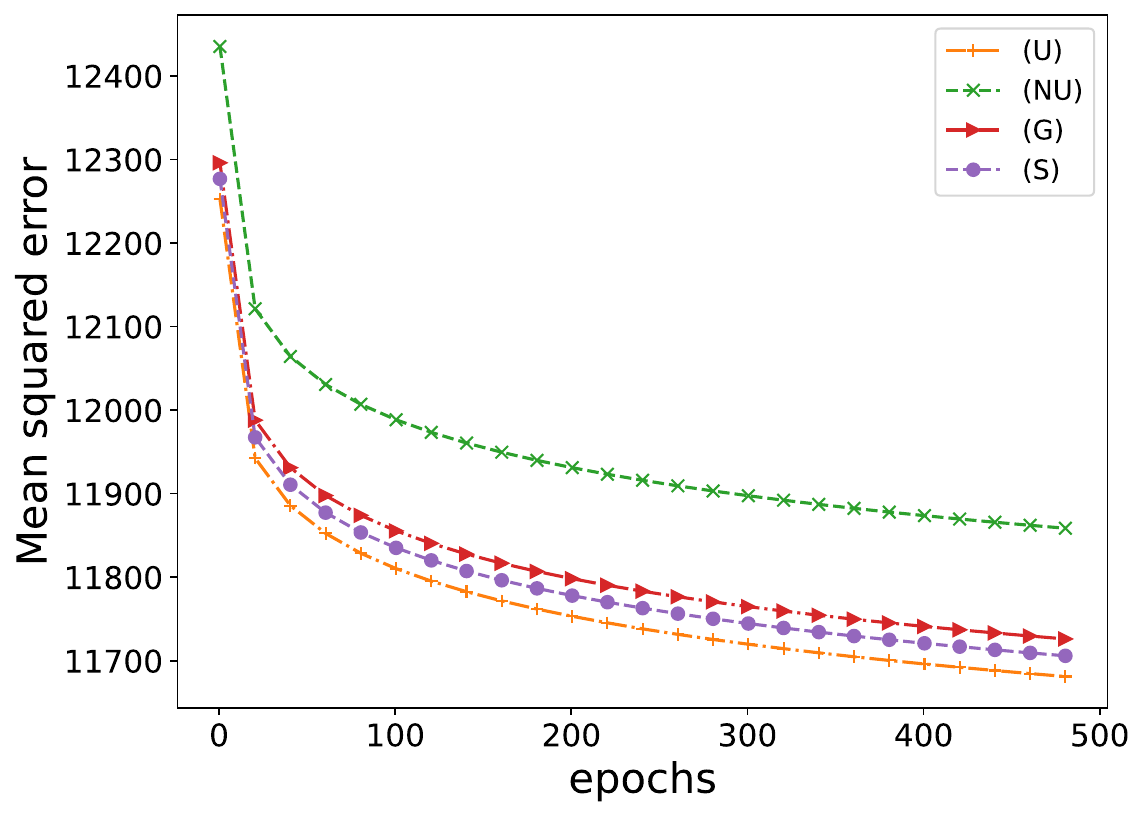}
\caption{Mean squared error with respect to epochs. We confirm the decreasing order of the mean squared error of $X_n-x^*$ with respect to $n$.}
\label{fig:L_p_norm_evolution}
\end{figure}

\subsubsection*{Acknowledgments}
We would like to thank François Portier for fruitful discussions on preliminary version of the manuscript. This project has benefited from state support managed by the Agence Nationale de la Recherche (French National Research Agency) under the reference ANR-20-SFRI-0001. 


\bibliographystyle{tmlr}
\newpage

\appendix
\section{Proof of Theorem \ref{scgdus_th_tlc}}
\label{app:scgdus_th_tcl}
\begin{proof}
Let us reformulate the SCORS iterates for all $n\geqslant 1$, as follows
\begin{equation}\label{scgdus_th_tlc_eq1}
    X_{n+1}=X_n - \dfrac{1}{n} \left(\nabla f(X_n)+\varepsilon_{n+1}\right),
\end{equation}
with
\begin{equation}\label{scgdus_th_tlc_eq2}
   \varepsilon_{n+1}= \cY_{n+1}-\nabla f(X_n),
\end{equation}
and 
\begin{equation}\label{scgdus_th_tlc_eq2b}
    \cY_{n+1}=D(V_{n+1})\nabla f_{U_{n+1}}(X_n).
\end{equation}
Moreover, we recall from \eqref{scgdus_th_convps_eq6} in the proof of Theorem \ref{scgdus_th_convps} that
\begin{equation}\label{scgdus_th_tlc_eq5}
    \E[\cY_{n+1}|\F_n]=\nabla f(X_n) \qquad a.s.
\end{equation}
Then, it follows from \eqref{scgdus_th_tlc_eq2} and \eqref{scgdus_th_tlc_eq5} that
\begin{equation}\label{scgdus_th_tlc_eq6}
    \E[\varepsilon_{n+1}|\F_n]= 0 \qquad a.s.,
\end{equation}
which means that $(\varepsilon_{n})$ is a martingale difference sequence adapted to the filtration $(\F_n)$.\\
Therefore, we can also deduce that almost surely
\begin{equation}\label{scgdus_th_tlc_eq7}
    \E[\varepsilon_{n+1} \varepsilon_{n+1}^T| \F_{n}]=\E[\cY_{n+1} \cY_{n+1}^T| \F_{n}]-\nabla f(X_n)\left(\nabla f(X_n)\right)^T.
\end{equation}
Since $X_n$ converges towards $x^*$ almost surely, we obtain from Assumption \ref{scgd_cond_eig} that 
\begin{equation}\label{scgdus_th_tlc_eq7b1}
     \lim_{n\to +\infty} \nabla f(X_n)= 0 \qquad a.s.,
\end{equation}
which implies that
\begin{equation}\label{scgdus_th_tlc_eq7b2}
    \lim_{n\to +\infty} \E[\varepsilon_{n+1} \varepsilon_{n+1}^T| \F_{n}]=\lim_{n\to +\infty} \E[\cY_{n+1} \cY_{n+1}^T| \F_{n}] \qquad a.s.
\end{equation}
However, we recall by definition \eqref{scgdus_th_tlc_eq2b} that
\begin{align}\label{scgdus_th_tlc_eq7b3}
    \E[\cY_{n+1} \cY_{n+1}^T| \F_{n}]&=\E[D(V_{n+1})\nabla f_{U_{n+1}}(X_n)\left(D(V_{n+1})\nabla f_{U_{n+1}}(X_n)\right)^T| \F_{n}]\nonumber\\
    &=\E[V_{n+1}V_{n+1}^T\nabla f_{U_{n+1}}(X_n)\left(\nabla f_{U_{n+1}}(X_n)\right)^T V_{n+1}V_{n+1}^T| \F_{n}].
\end{align}
Hereafter, we define $\mathcal{G}_{n+1}=\sigma(X_1,\dots,X_n,V_{n+1})$. It is obvious that $\F_n\subset \mathcal{G}_{n+1}$. Therefore, by the tower property of the conditional expectation and since $V_{n+1}$ is $\mathcal{G}_{n+1}$-measurable, we obtain from \eqref{scgdus_th_tlc_eq7b3} that
\begin{equation}\label{scgdus_th_tlc_eq7b4}
    \E[\cY_{n+1} \cY_{n+1}^T| \F_{n}]=\E\left[V_{n+1}V_{n+1}^T\E\Big[\nabla f_{U_{n+1}}(X_n)\left(\nabla f_{U_{n+1}}(X_n)\right)^T| \mathcal{G}_{n+1}\Big] V_{n+1}V_{n+1}^T\Big| \F_{n}\right].
\end{equation}
Furthermore, we have that
\begin{equation}\label{scgdus_th_tlc_eq7b5}
    \E\Big[\nabla f_{U_{n+1}}(X_n)\left(\nabla f_{U_{n+1}}(X_n)\right)^T| \mathcal{G}_{n+1}\Big]=\dfrac{1}{N}\sum_{k=1}^N \nabla f_k(X_n)\left(\nabla f_k(X_n)\right)^T.
\end{equation}
Thus, it follows from \eqref{scgdus_th_tlc_eq7b4} and \eqref{scgdus_th_tlc_eq7b5} that
\begin{equation}\label{scgdus_th_tlc_eq7b6}
    \E[\cY_{n+1} \cY_{n+1}^T| \F_{n}]=\E\left[V_{n+1}V_{n+1}^T Q_n V_{n+1}V_{n+1}^T| \F_{n}\right].
\end{equation}
where
\begin{equation}\label{scgdus_th_tlc_eq7b7}
    Q_n=\dfrac{1}{N}\sum_{k=1}^N \nabla f_k(X_n)\left(\nabla f_k(X_n)\right)^T.
\end{equation}
Let us consider the symmetric matrix $Q$ defined by
\begin{equation}\label{scgdus_th_tlc_eq7b8}
       Q=\dfrac{1}{N}\sum_{k=1}^N \nabla f_k(x^*)\left(\nabla f_k(x^*)\right)^T.
\end{equation}
In fact, we have from the Jensen inequality that
\begin{align}\label{scgdus_th_tlc_eq7b9}
    &\Big \lVert \E\left[V_{n+1}V_{n+1}^T Q_n V_{n+1}V_{n+1}^T| \F_{n}\right]-\E\left[V_{n+1}V_{n+1}^T Q V_{n+1}V_{n+1}^T\right]  \Big \rVert \nonumber\\
    &=\Big \lVert \E\left[V_{n+1}V_{n+1}^T Q_n V_{n+1}V_{n+1}^T| \F_{n}\right]-\E\left[V_{n+1}V_{n+1}^T Q V_{n+1}V_{n+1}^T| \F_{n}\right] \Big \rVert \nonumber\\
    &=\Big \lVert \E\left[ V_{n+1}V_{n+1}^T (Q_n-Q) V_{n+1}V_{n+1}^T| \F_{n}\right]\Big \rVert \nonumber \\
    &\leqslant \E\left[ \lVert V_{n+1}V_{n+1}^T (Q_n-Q) V_{n+1}V_{n+1}^T \lVert| \F_{n}\right] \nonumber \\
    &\leqslant \E\left[\lVert V_{n+1}\rVert^4 \lVert Q_n-Q \rVert| \F_{n}\right].
\end{align}
However, $Q_n$ is $\F_{n}$-measurable and $V_{n+1}$ is independent from $\F_{n}$. Therefore, Assumption \ref{scgd_cond1b_s_policy} with the inequality \eqref{scgdus_th_tlc_eq7b9}, imply that there exists a positive constant $m_4$ such that almost surely
\begin{equation}\label{scgdus_th_tlc_eq7b10}
    \Big \lVert \E\left[V_{n+1}V_{n+1}^T Q_n V_{n+1}V_{n+1}^T| \F_{n}\right]-\E\left[V_{n+1}V_{n+1}^T Q V_{n+1}V_{n+1}^T\right] \Big \rVert \leqslant m_4 \lVert Q_n-Q \rVert.
\end{equation}
Once again from the almost sure convergence of $X_n$ towards $x^*$ and Assumption \ref{scgd_cond_eig}, we obtain that 
\begin{equation}\label{scgdus_th_tlc_eq7b11}
    \lim_{n\to +\infty} \lVert Q_n - Q\rVert=0  \qquad a.s.
\end{equation}
Hence, we deduce from \eqref{scgdus_th_tlc_eq7b10} and \eqref{scgdus_th_tlc_eq7b11} that
\begin{equation}\label{scgdus_th_tlc_eq7b12}
    \lim_{n\to +\infty} \Big \lVert \E\left[V_{n+1}V_{n+1}^T Q_n V_{n+1}V_{n+1}^T| \F_{n}\right]-\E\left[V_{n+1}V_{n+1}^T Q V_{n+1}V_{n+1}^T\right] \Big \rVert =0 \qquad a.s.
\end{equation}
which leads to
\begin{equation}\label{scgdus_th_tlc_eq7b13}
    \lim_{n\to +\infty} \E\left[V_{n+1}V_{n+1}^T Q_n V_{n+1}V_{n+1}^T| \F_{n}\right]= \Gamma \qquad a.s.,
\end{equation}
where
\begin{equation}\label{scgdus_th_tlc_eq7b14}
    \Gamma= \E\left[VV^T Q VV^T\right].
\end{equation}
Thus, combining the three contributions \eqref{scgdus_th_tlc_eq7b2}, \eqref{scgdus_th_tlc_eq7b6} and \eqref{scgdus_th_tlc_eq7b13}, it follows that
\begin{equation}\label{scgdus_th_tlc_eq12}
    \lim_{n\to+\infty}\E[\varepsilon_{n+1} \varepsilon_{n+1}^T| \F_{n}]=\Gamma \qquad a.s.
\end{equation}
Therefore, we deduce from Toeplitz's lemma that
\begin{equation}\label{scgdus_th_tlc_eq13}
    \lim_{n\to +\infty} \dfrac{1}{n}\sum_{k=1}^n \E[\varepsilon_k \varepsilon_k^T\lvert \mathcal{F}_{k-1}] = \Gamma \qquad a.s.
\end{equation}
Furthermore, we have for all $n\geqslant 1$
\begin{equation}\label{scgdus_th_tlc_eq14}
    \lVert\varepsilon_{n+1} \rVert^2 \leqslant 2\left(1+\lVert V_{n+1} \rVert^4 \right) C_n,
\end{equation}
where
\begin{equation}\label{scgdus_th_tlc_eq14b}
    C_n=\max_{j=1,\dots,N}  \lVert\nabla f_j (X_n)\rVert^2.
\end{equation}
Therefore, we define for all $\epsilon >0$ and some positive constant M,
\begin{equation}\label{scgdus_th_tlc_eq14c2}
    A_n=\dfrac{1}{n}\sum_{k=1}^n \E \left[\lVert\varepsilon_k\rVert^2 \mathds{1}_{\{ \lVert\varepsilon_k\rVert \geqslant \epsilon\sqrt{n}\}} \mathds{1}_{\{ \lVert V_k\rVert \leqslant M\}}\lvert \mathcal{F}_{k-1} \right],
\end{equation}
and 
\begin{equation}\label{scgdus_th_tlc_eq14c3}
    B_n=\dfrac{1}{n}\sum_{k=1}^n \E \left[\lVert\varepsilon_k\rVert^2 \mathds{1}_{\{ \lVert\varepsilon_k\rVert \geqslant \epsilon\sqrt{n}\}} \mathds{1}_{\{ \lVert V_k\rVert > M\}}\lvert \mathcal{F}_{k-1} \right].
\end{equation}
We obtain from \eqref{scgdus_th_tlc_eq14} that
\begin{align}\label{scgdus_th_tlc_eq14c4}
    A_n&= \dfrac{1}{n}\sum_{k=1}^n \E \left[\dfrac{\lVert\varepsilon_k\rVert^4}{\lVert\varepsilon_k\rVert^2} \mathds{1}_{\{ \lVert\varepsilon_k\rVert \geqslant \epsilon\sqrt{n}\}} \mathds{1}_{\{ \lVert V_k\rVert \leqslant M\}}\lvert \mathcal{F}_{k-1} \right] \nonumber \\
    &\leqslant \dfrac{1}{\epsilon^2n^2}\sum_{k=1}^n \E \left[\lVert\varepsilon_k\rVert^4 \mathds{1}_{\{ \lVert V_k\rVert \leqslant M\}}\lvert \mathcal{F}_{k-1} \right] \nonumber \\
    &\leqslant \dfrac{4(1+M^4)^2}{\epsilon^2n^2}\sum_{k=1}^n \E \left[ C_{k-1}^2 \lvert \mathcal{F}_{k-1} \right] \nonumber \\
    &\leqslant \dfrac{4(1+M^4)^2}{\epsilon^2n} \sup_{n\geqslant 1}\left\{ C_n^2\right\}.
\end{align}
Since $X_n$ converges towards $x^*$, it follows with Assumption \ref{scgd_cond_eig} that 
\begin{equation}\label{scgdus_th_tlc_eq14c4b}
    \lim_{n\to +\infty} C_n= \max_{j=1,\dots,N}  \lVert\nabla f_j (x^*)\rVert^2 < +\infty \qquad a.s.
\end{equation}
Then, we have that almost surely
\begin{equation}\label{scgdus_th_tlc_eq14c5}
    \sup_{n\geqslant 1}\left\{ C_n^2\right\} < +\infty, 
\end{equation}
which immediately implies with \eqref{scgdus_th_tlc_eq14c4} that
\begin{equation}\label{scgdus_th_tlc_eq14c6}
    \lim_{n\to +\infty} A_n = 0 \qquad a.s.
\end{equation}
Moreover, we obtain with the same inequality \eqref{scgdus_th_tlc_eq14} that almost surely
\begin{align}\label{scgdus_th_tlc_eq14c7}
    B_n &\leqslant \dfrac{1}{n}\sum_{k=1}^n \E \left[\lVert\varepsilon_k\rVert^2 \mathds{1}_{\{ \lVert V_k\rVert > M\}}\lvert \mathcal{F}_{k-1} \right] \nonumber \\
    &\leqslant \dfrac{2}{n}\sum_{k=1}^n \E \left[\left(1+\lVert V_{k} \rVert^4 \right) C_{k-1} \mathds{1}_{\{ \lVert V_k\rVert > M\}}\lvert \mathcal{F}_{k-1} \right] \nonumber \\
    &\leqslant \dfrac{2}{n}\sum_{k=1}^n \E \left[\left(1+\lVert V_{k} \rVert^4 \right) \mathds{1}_{\{ \lVert V_k\rVert > M\}}\lvert \mathcal{F}_{k-1} \right] C_{k-1}\nonumber \\
     &\leqslant \left(\dfrac{2}{n}\sum_{k=1}^n C_{k-1}\right) \E \left[\left(1+\lVert V \rVert^4 \right) \mathds{1}_{\{ \lVert V\rVert > M\}}\right].
\end{align}
By using Toeplitz's lemma and the contribution \eqref{scgdus_th_tlc_eq14c4b}, we deduce that
\begin{equation}\label{scgdus_th_tlc_eq14c8}
    \lim_{n\to +\infty} \left(\dfrac{1}{n}\sum_{k=1}^n C_{k-1} \right) = \max_{j=1,\dots,N}  \lVert\nabla f_j (x^*)\rVert^2 < +\infty \qquad a.s.
\end{equation}
It follows from \eqref{scgdus_th_tlc_eq14c7} and \eqref{scgdus_th_tlc_eq14c8} that for any positive constant M,
\begin{equation}\label{scgdus_th_tlc_eq14c9}
     \limsup_{n\to +\infty} B_n \leqslant 2\left( \max_{j=1,\dots,N}  \lVert\nabla f_j (x^*)\rVert^2\right)\E \left[\left(1+\lVert V \rVert^4 \right) \mathds{1}_{\{ \lVert V\rVert > M\}}\right] \qquad a.s.
\end{equation}
Nevertheless, we obtain from the Lebesgue dominated convergence theorem that
\begin{equation}\label{scgdus_th_tlc_eq14c10}
    \lim_{M\to +\infty} \E \left[\left(1+\lVert V \rVert^4 \right) \mathds{1}_{\{ \lVert V\rVert > M\}}\right]= 0.
\end{equation}
Hence, it implies with \eqref{scgdus_th_tlc_eq14c9} that
\begin{equation}\label{scgdus_th_tlc_eq14c11}
    \lim_{n\to +\infty} B_n=0 \qquad a.s.
\end{equation}
Consequently, by putting together the contributions \eqref{scgdus_th_tlc_eq14c6} and \eqref{scgdus_th_tlc_eq14c11}, we immediately deduce that for all $\epsilon >0$,
\begin{equation*}
    \lim_{n\to +\infty}\dfrac{1}{n}\sum_{k=1}^n \E \left[\lVert\varepsilon_k\rVert^2 \mathds{1}_{\{ \lVert\varepsilon_k\rVert \geqslant \epsilon\sqrt{n}\}}\lvert \mathcal{F}_{k-1} \right]= 0 \qquad a.s.
\end{equation*}
Finally, it follows from the central limit theorem for stochastic algorithms given by Theorem 2.3 in \citep{zhang2016central} that 
\begin{equation*} 
        \sqrt{n} (X_n -x^*)  \quad \overset{\mathcal{L}}{\underset{n\to +\infty}{\longrightarrow}} \quad \mathcal{N}_d(0,\Sigma),
\end{equation*}
where 
\[\Sigma=\int_0^\infty (e^{-(H-\I_d/2)u})^T \Gamma e^{-(H-\I_d/2)u} du,\]
which completes the proof of Theorem \ref{scgdus_th_tlc}.
\end{proof}
\section{Proof of Theorem \ref{scgdus_th_mse1}}
\label{app:scgdus_th_mse1}
\begin{proof}
We already proved in \eqref{scgdus_th_convps_eq11} that for all $n \geq 1$,
\begin{equation}
    \E[T_{n+1}|\F_n]\leqslant T_n-2\gamma_n\langle X_n -x^*,\nabla f(X_n)\rangle +2m_4\gamma_n^2(\tau^2(X_n)+ \theta^*) \qquad a.s. 
\end{equation}
Therefore, it follows from Assumption \ref{scgd_cond4} that $\langle  X_n-x^*, \nabla f(X_n)\rangle \geqslant \mu T_n$, which leads to
\begin{equation}
    \E[T_{n+1}|\F_n]\leqslant (1-2\mu\gamma_n)T_n+2m_4\gamma_n^2(\tau^2(X_n)+ \theta^*) \qquad a.s. 
\end{equation}
By taking the expectation on both side of this inequality, we obtain that for all $n \geq 1$,
\begin{equation}\label{scgdus_th_mse1_eq1}
    \E[T_{n+1}]\leqslant  \left(1-2\mu\gamma_n\right) \E[T_n]  +2m_4\gamma_n^2\big(\E[\tau^2(X_n)]+\theta^*\big).
\end{equation}
Hence, we deduce from Corollary \ref{scgdus_cor_mse2a} in Appendix \ref{app:scgdus_th_mse2a} below that there exists a positive constant $b_1$ such that, for all $n\geqslant 1$,  $\E[\tau^2(X_n)]\leqslant b_1$. Consequently, the inequality \eqref{scgdus_th_mse1_eq1} immediately implies, for all $n\geqslant 1$, that
\begin{equation}
    \E[T_{n+1}]\leqslant  \left(1-\frac{a}{(n+1)^\alpha} \right)\E[T_n]  + \frac{b}{(n+1)^{2\alpha}}
\end{equation}
where $a=2\mu c$ and $b=c^2 2^{2\alpha+1}m_4(b_1+\theta^*)$.
Finally, we can conclude from Lemma A.3 in \citep{bercu2021asymptotic} that there exists a positive constant $K$ such that for any $n \geqslant 1$,
\[\E[\lVert X_n-x^*\rVert^2]\leqslant \dfrac{K}{n^\alpha},\]
which achieves the proof of Theorem \ref{scgdus_th_mse1}.
\end{proof}
\section{Additional asymptotic result on the convergence in $\bL^{2p}$}
\label{app:scgdus_th_mse2a}
The purpose of this appendix is to provide additional asymptotic properties
of the SCORS algorithm that will be useful in the proofs of our main results. First of all, we recall for some integer $p\geqslant 1$ that $T_n^p=\lVert X_n-x^* \rVert^{2p}$.
\begin{theorem}\label{scgdus_th_mse2a}
    Consider that $(X_n)$ is the sequence generated by the SCORS algorithm with decreasing step $\gamma_n$ satisfying (\ref{gamma_cond1}). Suppose that Assumptions \ref{scgd_cond1b_s_policy}, \ref{scgd_cond1}, \ref{scgd_cond4} and \ref{scgd_cond5} are satisfied.
    Then, we have that
\begin{equation}\label{scgdus_th_mse2a_res1}
            \sum_{n=1}^{\infty} \gamma_n T_n^p< +\infty \qquad a.s.,
\end{equation}
and
\begin{equation}\label{scgdus_th_mse2a_res2}
            \sum_{n=1}^{\infty} \gamma_n \E[T_n^p] < +\infty. \qquad 
\end{equation}
\end{theorem}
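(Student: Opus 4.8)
The plan is to establish a Robbins--Siegmund recursion for $(T_n^p)$, where $T_n = \lVert X_n - x^*\rVert^2$, and conclude. Writing $\cY_{n+1} = D(V_{n+1})\nabla f_{U_{n+1}}(X_n)$, the iteration reads $T_{n+1} = T_n + R_n$ with $R_n = -2\gamma_n\langle X_n - x^*, \cY_{n+1}\rangle + \gamma_n^2\lVert\cY_{n+1}\rVert^2$. First I would expand $T_{n+1}^p = (T_n+R_n)^p$ by the binomial theorem, keeping the $j=0$ and $j=1$ terms explicit and treating $\binom{p}{j}T_n^{p-j}R_n^j$ for $j\geq2$ as a remainder. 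Taking $\E[\,\cdot\,|\F_n]$: the $j=0$ term is $T_n^p$; for $j=1$, I would use $\E[\cY_{n+1}|\F_n] = \nabla f(X_n)$ (already proved in \eqref{scgdus_th_convps_eq6}) and Assumption \ref{scgd_cond4} to extract the drift $p T_n^{p-1}\big(-2\gamma_n\langle X_n - x^*,\nabla f(X_n)\rangle\big) \leq -2p\mu\gamma_n T_n^p$, the remaining $\gamma_n^2$ contribution of this term being absorbed into the remainder estimate below.

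The core of the argument is to show that for every $j\geq1$, $\E[\,|T_n^{p-j}R_n^j|\,|\,\F_n]$ is of order $\gamma_n^2(T_n^p+1)$. Expanding $R_n^j$ produces monomials $\gamma_n^{j+l}\langle X_n - x^*,\cY_{n+1}\rangle^{j-l}\lVert\cY_{n+1}\rVert^{2l}$, $0\leq l\leq j$; applying Cauchy--Schwarz, $|\langle X_n - x^*,\cY_{n+1}\rangle|\leq\sqrt{T_n}\,\lVert\cY_{n+1}\rVert$, together with $\lVert\cY_{n+1}\rVert^2\leq\lVert V_{n+1}\rVert^4\lVert\nabla f_{U_{n+1}}(X_n)\rVert^2$ (as in \eqref{scgdus_th_convps_eq7b}), each monomial is bounded by $\gamma_n^{j+l}T_n^{p-j+(j-l)/2}\lVert V_{n+1}\rVert^{2(j+l)}\lVert\nabla f_{U_{n+1}}(X_n)\rVert^{j+l}$. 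Since $V_{n+1}$ and $U_{n+1}$ are conditionally independent given $\F_n$, the conditional expectation factorises. For the $V$-factor, $2(j+l)\leq4p$, so Jensen and \eqref{scgd_cond5_eq2} bound it by a constant. For the gradient factor, I would split $\nabla f_{U_{n+1}}(X_n) = \big(\nabla f_{U_{n+1}}(X_n)-\nabla f_{U_{n+1}}(x^*)\big)+\nabla f_{U_{n+1}}(x^*)$ and, using $j+l\leq2p$, Jensen, and the $2p$-th order growth in Assumption \ref{scgd_cond5}, obtain $\E[\lVert\nabla f_{U_{n+1}}(X_n)\rVert^{j+l}\,|\,\F_n]\leq C(T_n^{(j+l)/2}+1)$. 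Multiplying, $\E[\,|T_n^{p-j}R_n^j|\,|\,\F_n]\leq C\sum_{l=0}^j\gamma_n^{j+l}\big(T_n^p+T_n^{p-(j+l)/2}\big)$; since $j+l\geq2$ whenever $(j,l)\neq(1,0)$ and $\gamma_n\to0$, one has $\gamma_n^{j+l}\leq\gamma_n^2$ for $n$ large, and since $0\leq p-(j+l)/2<p$, also $T_n^{p-(j+l)/2}\leq T_n^p+1$. Collecting everything, for $n$ large enough,
\[
\E[T_{n+1}^p\,|\,\F_n]\;\leq\;\big(1+c_1\gamma_n^2\big)T_n^p\;-\;2p\mu\gamma_n T_n^p\;+\;c_2\gamma_n^2\qquad a.s.
\]
for some positive constants $c_1,c_2$. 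Since $\sum_n\gamma_n^2<\infty$ by \eqref{gamma_cond1}, the Robbins--Siegmund theorem (with $\alpha_n=c_1\gamma_n^2$, $\beta_n=2p\mu\gamma_n T_n^p$, $\chi_n=c_2\gamma_n^2$) gives that $(T_n^p)$ converges a.s. and $\sum_n\gamma_n T_n^p<\infty$ a.s., which is \eqref{scgdus_th_mse2a_res1}; the first finitely many terms are harmless since each $T_n$ is a.s. finite. Taking expectations in the displayed inequality --- legitimate because $\E[T_n^p]<\infty$ for all $n$, which follows by induction on the iteration once $X_1\in\bL^{2p}$ --- and applying the deterministic counterpart of Robbins--Siegmund to $u_n=\E[T_n^p]$ yields $\sum_n\gamma_n\E[T_n^p]<\infty$, i.e. \eqref{scgdus_th_mse2a_res2}.

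The main obstacle is the combinatorial bookkeeping in the second step: one must check that $-2p\mu\gamma_n T_n^p$ is the \emph{only} term of order $\gamma_n$ (every remainder monomial carries a factor $\gamma_n^{j+l}$ with $j+l\geq2$), and that after conditioning each mixed power $T_n^{p-j}\lVert V_{n+1}\rVert^{2(j+l)}\lVert\nabla f_{U_{n+1}}(X_n)\rVert^{j+l}$ reduces to a linear combination of $T_n^p$ and $1$ with coefficients summable against $\sum_n\gamma_n^2$. The conditional independence of $V_{n+1}$ and $U_{n+1}$, and the matched $2p$-/$4p$-th moment conditions of Assumption \ref{scgd_cond5}, are precisely what make this work; this is also why, in contrast to Theorems \ref{scgdus_th_mse1} and \ref{scgdus_th_mse2}, no smallness restriction on $c\mu$ is needed here.
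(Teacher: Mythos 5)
Your proof is correct and follows essentially the same route as the paper's (which is only sketched by deferral to Theorem 8 of \citet{bercu2024saga}): a binomial expansion of $T_{n+1}^p=(T_n+R_n)^p$, isolation of the order-$\gamma_n$ drift $-2p\mu\gamma_n T_n^p$ via Assumption \ref{scgd_cond4}, control of all remaining monomials at order $\gamma_n^2(T_n^p+1)$ using conditional independence, Jensen, and the matched moment bounds of Assumption \ref{scgd_cond5}, and a final appeal to the Robbins--Siegmund theorem and its deterministic analogue. Your observation that \eqref{scgdus_th_mse2a_res2} implicitly requires $X_1\in\bL^{2p}$ (as stated explicitly in Theorem \ref{scgdus_th_mse2} but omitted here) is accurate and worth retaining.
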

\noindent The proof of Theorem \ref{scgdus_th_mse2a} is very analogous to that of Theorem 8 in \citep{bercu2024saga}. A direct consequence of Theorem \ref{scgdus_th_mse2a}, using the left-hand side of (\ref{gamma_cond1}), is as follows.
\begin{corollary}\label{scgdus_cor_mse2a}
    Assume that the conditions of Theorem \ref{scgdus_th_mse2a} hold. Then, for all $p\geqslant 1$, we have 
    \[\lim_{n\to +\infty} T_n^p = 0 \qquad a.s.,\]
    and 
    \[\lim_{n\to +\infty} \E[T_n^p]=0. \qquad \]
\end{corollary}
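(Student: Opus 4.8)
The plan is to run a Robbins--Siegmund (almost supermartingale) argument on $T_n^p=\lVert X_n-x^*\rVert^{2p}$, by strong induction on the integer $p$. Write $Z_{n+1}=\langle X_n-x^*,D(V_{n+1})\nabla f_{U_{n+1}}(X_n)\rangle$ and $N_{n+1}=\lVert D(V_{n+1})\nabla f_{U_{n+1}}(X_n)\rVert^2$, so that $T_{n+1}=T_n-2\gamma_nZ_{n+1}+\gamma_n^2N_{n+1}$, with $\E[Z_{n+1}\mid\F_n]=\langle X_n-x^*,\nabla f(X_n)\rangle\geqslant\mu T_n$ by Assumption~\ref{scgd_cond4}. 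Two reductions will be used repeatedly. First, the power-mean inequality turns Assumption~\ref{scgd_cond5} into the pointwise bounds $\frac1N\sum_k\lVert\nabla f_k(x)-\nabla f_k(x^*)\rVert^{2\ell}\leqslant L_p^{\ell/p}\lVert x-x^*\rVert^{2\ell}$ and $\E[\lVert V_{n+1}\rVert^{4\ell}\mid\F_n]\leqslant m_{4p}^{\ell/p}$ for every $1\leqslant\ell\leqslant p$, while $\frac1N\sum_k\lVert\nabla f_k(x^*)\rVert^{2\ell}$ is finite. Second, from $N_{n+1}\leqslant\lVert V_{n+1}\rVert^{4}\lVert\nabla f_{U_{n+1}}(X_n)\rVert^{2}$ (as in \eqref{scgdus_th_convps_eq7b}), the conditional independence of $V_{n+1}$ and $U_{n+1}$, and the first reduction, one obtains bounds $\E[N_{n+1}^{\ell}\mid\F_n]\leqslant a_\ell T_n^{\ell}+b_\ell$ for $1\leqslant\ell\leqslant p$ and suitable constants $a_\ell,b_\ell\geqslant0$, while Cauchy--Schwarz gives $|Z_{n+1}|\leqslant T_n^{1/2}N_{n+1}^{1/2}$.

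\textbf{Base case and inductive step.} For $p=1$, inequality \eqref{scgdus_th_convps_eq11} from the proof of Theorem~\ref{scgdus_th_convps}, combined with Assumption~\ref{scgd_cond4} and $\tau^2(X_n)\leqslant L_p^{1/p}T_n$, already gives $\E[T_{n+1}\mid\F_n]\leqslant(1+C_1\gamma_n^2)T_n-2\mu\gamma_nT_n+C_1'\gamma_n^2$. For $p\geqslant2$, assume as induction hypothesis that $\sum_n\gamma_nT_n^{j}<\infty$ a.s.\ and $\sum_n\gamma_n\E[T_n^{j}]<\infty$ for all $1\leqslant j\leqslant p-1$; here $\E[T_n^p]<\infty$ for every $n$ follows by a straightforward induction on $n$ from the above moment bounds and $X_1\in\bL^{2p}$. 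The second-order Taylor expansion of $t\mapsto t^p$ on $[0,\infty)$ gives, with $h_n=-2\gamma_nZ_{n+1}+\gamma_n^2N_{n+1}$ and since $T_{n+1}=T_n+h_n\geqslant0$,
\[
T_{n+1}^p\leqslant T_n^p+pT_n^{p-1}h_n+\tfrac{p(p-1)}{2}\bigl(T_n+|h_n|\bigr)^{p-2}h_n^2 ,
\]
together with $(T_n+|h_n|)^{p-2}\leqslant c_p(T_n^{p-2}+|h_n|^{p-2})$. Expanding $h_n$, using $|Z_{n+1}|\leqslant T_n^{1/2}N_{n+1}^{1/2}$ and $\E[N_{n+1}^{\ell}\mid\F_n]\leqslant a_\ell T_n^{\ell}+b_\ell$, and then the crude inequalities $\gamma_n^{k}\leqslant\gamma_1^{k-2}\gamma_n^2$ for $k\geqslant2$ and $T_n^{s}\leqslant T_n^{p-1}+1$ for $0\leqslant s\leqslant p-1$, every term beyond the leading drift reduces to one of $\gamma_n^2T_n^p$, $\gamma_n^2T_n^{j}$ with $j\leqslant p-1$, or $\gamma_n^2$. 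Taking conditional expectations, with $\E[Z_{n+1}\mid\F_n]\geqslant\mu T_n$ handling the drift, we arrive in both cases at
\[
\E[T_{n+1}^p\mid\F_n]\leqslant(1+C_p\gamma_n^2)\,T_n^p-2p\mu\gamma_n\,T_n^p+\gamma_n^2\sum_{j=0}^{p-1}d_jT_n^{j},
\]
for nonnegative constants $C_p,d_0,\dots,d_{p-1}$ (for $p=1$ the last sum is just $d_0$).

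\textbf{Conclusion.} The remainder $R_n:=\gamma_n^2\sum_{j=0}^{p-1}d_jT_n^{j}$ is $\F_n$-measurable, nonnegative, and a.s.\ summable: $\sum_n\gamma_n^2T_n^{j}\leqslant\gamma_1\sum_n\gamma_nT_n^{j}<\infty$ a.s.\ for $1\leqslant j\leqslant p-1$ by the induction hypothesis, and $\sum_n\gamma_n^2<\infty$ by \eqref{gamma_cond1}. The Robbins--Siegmund theorem with $V_n=T_n^p$, $\beta_n=C_p\gamma_n^2$, $\xi_n=R_n$ and $\zeta_n=2p\mu\gamma_nT_n^p\geqslant0$ then yields both that $(T_n^p)$ converges a.s.\ and that $\sum_n\gamma_nT_n^p<\infty$ a.s., which is \eqref{scgdus_th_mse2a_res1}. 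Taking expectations in the recursion, using $\sum_n\gamma_n^2\E[T_n^{j}]\leqslant\gamma_1\sum_n\gamma_n\E[T_n^{j}]<\infty$ and the deterministic counterpart of Robbins--Siegmund with $\E[T_1^p]<\infty$, gives $\sum_n\gamma_n\E[T_n^p]<\infty$, which is \eqref{scgdus_th_mse2a_res2}. Corollary~\ref{scgdus_cor_mse2a} is then immediate, since $\sum_n\gamma_n=+\infty$ forces any convergent sequence with $\gamma_n$-summable series to tend to $0$.

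\textbf{Main obstacle.} The only substantive step is the inductive one: one must expand $h_n^k$ for $2\leqslant k\leqslant p$, control every resulting monomial in $\gamma_n,T_n,Z_{n+1},N_{n+1}$ after Cauchy--Schwarz, and verify that the moment orders furnished by Assumption~\ref{scgd_cond5} (exactly $4p$ for $V_{n+1}$ and $2p$ for the $\nabla f_k$) are precisely what is needed to absorb the highest-order contribution $N_{n+1}^p$, leaving nothing but the benign trio above — in particular keeping the sign of the $\gamma_nT_n^p$ coefficient so that $\zeta_n\geqslant0$. This bookkeeping is entirely analogous to the proof of Theorem~8 in \citep{bercu2024saga}.
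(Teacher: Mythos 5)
Your argument is correct, but it reconstructs far more than the paper proves at this point. The paper's proof of Corollary~\ref{scgdus_cor_mse2a} is a single sentence: it is declared a direct consequence of Theorem~\ref{scgdus_th_mse2a} together with the divergence $\sum_n\gamma_n=+\infty$, and the proof of that theorem is itself deferred to Theorem~8 of \citet{bercu2024saga}. You instead rebuild the whole chain — the Robbins--Siegmund induction on $p$, with the power-mean reduction of Assumption~\ref{scgd_cond5} to lower moments, the bound $\E[N_{n+1}^{\ell}\mid\F_n]\leqslant a_\ell T_n^{\ell}+b_\ell$, and the second-order Taylor expansion of $t\mapsto t^p$ — and only then deduce the corollary; this matches the intended (outsourced) argument. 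Your route has one genuine advantage over the paper's shortcut: the statement of Theorem~\ref{scgdus_th_mse2a} only asserts $\sum_n\gamma_n T_n^p<\infty$ a.s., and combined with $\sum_n\gamma_n=+\infty$ this by itself yields only $\liminf_{n\to+\infty}T_n^p=0$; upgrading to $\lim_{n\to+\infty}T_n^p=0$ requires the almost sure convergence of $(T_n^p)$, which is precisely the extra output of the Robbins--Siegmund theorem that you record explicitly and that the paper leaves implicit (and likewise for $\E[T_n^p]$ via the deterministic counterpart). One small mismatch to note: you invoke $X_1\in\bL^{2p}$ to secure integrability of $T_n^p$, a hypothesis that appears in Theorem~\ref{scgdus_th_mse2} but is not listed among the conditions of Theorem~\ref{scgdus_th_mse2a}; you are quietly repairing an omission in the paper rather than deviating from it.
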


\end{document}